\documentclass[lettersize,journal]{IEEEtran}
\usepackage{amsmath,amsfonts}
\usepackage{algorithmic}
\usepackage{algorithm}
\usepackage{array}
\usepackage[caption=false,font=normalsize,labelfont=sf,textfont=sf]{subfig}
\usepackage[colorlinks,urlcolor=blue,linkcolor=blue,citecolor=blue]{hyperref}
\usepackage{textcomp}
\usepackage{stfloats}
\usepackage{url}
\usepackage{verbatim}
\usepackage{graphicx}
\usepackage[numbers]{natbib}
\usepackage{amsthm,amssymb}
\usepackage{booktabs}
\usepackage{multirow}

\newtheorem{theorem}{Theorem}
\newtheorem{lemma}{Lemma}
\newtheorem{corollary}{Corollary}

\newtheorem{remark}{Remark}
\newtheorem{definition}{Definition}

\DeclareMathOperator{\VC}{VC}
\DeclareMathOperator{\Pdim}{Pdim}
\DeclareMathOperator{\KL}{KL}
\DeclareMathOperator{\TV}{TV}

\newcommand{\mathbbm}[1]{\mathbf{#1}}

\begin{document}

\title{How Many Independent Samples Does a Satellite Image Contain? Generalization Bounds for Spatially Dependent Data}

\author{Robin Young\\
Department of Computer Science and Technology\\
University of Cambridge, UK \\
ray25@cam.ac.uk}

\markboth{}{}
\pagestyle{plain}
\thispagestyle{plain}

\maketitle

\begin{abstract}
Machine learning classifiers for remote sensing imagery are typically evaluated as though every pixel were an independent sample. Spatial autocorrelation violates this assumption, since neighboring pixels carry redundant information which inflates sample sizes. How many independent samples does a satellite image actually contain? For an $n \times n$ image whose spatial correlation persists over a range of $r$ pixels, the effective sample size is $\Theta(n^2/r^2)$, not $n^2$. We prove this as a finite-sample upper bound for classifiers on spatially correlated data, and show via a matching lower bound that the rate is tight, and no algorithm can do better. We extend the results to images with directional correlation and spatially varying correlation structure. Our result justifies spatial cross-validation since block holdout with separation proportional to the correlation range achieves optimal generalization guarantees, while random holdout can underestimate confidence interval widths by a factor proportional to $r$. We validate the theory on synthetic data and satellite image tiles from three sensors (Landsat 8, Sentinel-2, and Sentinel-1).
\end{abstract}

\noindent\textbf{Keywords: }
Generalization bounds, spatial dependence, sample complexity, spatial autocorrelation

\section{Introduction}
\label{sec:intro}

A persistent methodological issue in remote sensing is spatial autocorrelation, namely the mismatch between the number of labeled pixels available and the amount of statistically independent information they contain. An $n \times n$ satellite image nominally provides~$n^2$ labeled samples, but neighboring pixels are spatially correlated. Adjacent forest pixels look alike, adjacent urban pixels look alike, because a scene is composed of internally homogeneous objects (fields, forest stands) whose size and spacing set the distance over which correlation persists~\cite{jupp1988autocorrelation}. Standard learning-theoretic guarantees, that is, high-probability bounds on how far a classifier's test error can depart from its training error, assume independent and identically distributed samples and therefore do not apply.

In practice, this manifests as a well-documented bias. Models evaluated by random holdout on spatially autocorrelated imagery report higher accuracy than models evaluated by spatially structured holdout~\cite{roberts2017cross, wadoux2021spatial, ploton2020spatial, meyer2021predicting}. The recognition that covariance structure differs across strata, and that treating a scene as globally homogeneous is inadequate, has a long history in geostatistics~\cite{wang2009modeling}. Studies of sampling design under spatial dependence have shown empirically that prediction accuracy degrades sharply once the sampling interval approaches the autocorrelation range~\cite{rocha2021role}. Concurrently, Liao et al.~\cite{liao2026scaling} have established an empirical scaling law relating sample size to classifier performance in remote sensing. Our results complement theirs by providing a theoretical explanation for why the effective sample size entering such scaling relationships is $\Theta(n^2/r^2)$ rather than the nominal pixel count. The remote sensing community has converged on the recommendation to use spatial cross-validation, but a characterization of the cost of ignoring spatial structure, how much independent information an image truly contains, and a proof that the standard spatial CV prescription is optimal has remained open. Meanwhile, the statistical learning theory literature has developed generalization bounds for dependent data under mixing conditions~\cite{mohri2010stability, kuznetsov2017generalization, modha1996minimum}, but almost exclusively for temporally indexed (one-dimensional) sequences. \footnote{Readers unfamiliar with statistical learning theory may find the primer in the appendix helpful. It glosses the VC-dimension and uniform-convergence machinery used below.}

We bridge this gap. Our contributions are a finite-sample generalization bound for classifiers on a two-dimensional $\beta$-mixing random field, with an explicit dependence on the spatial correlation range~$r$, a matching minimax lower bound showing the rate is tight up to logarithmic factors, and a corollary providing theoretical justification for spatial cross-validation in remote sensing and quantifying the overoptimism of random holdout. For a typical Sentinel-2 scene with correlation range 30 pixels, the effective sample size may be much smaller than the pixel count, meaning reported confidence intervals could be too narrow.

\section{Setup}
\label{sec:setup}

Let $\{(X_s, Y_s)\}_{s \in \mathbb{Z}^2}$ be a strictly stationary random field, where $X_s \in \mathcal{X} \subseteq \mathbb{R}^d$ is a feature vector (e.g. reflectance) and $Y_s \in \{1, \dots, K\}$ is a land cover label. We observe this field on the lattice $\Lambda_n = \{1, \dots, n\}^2$.

\begin{definition}[Spatial $\beta$-mixing with latent alignment]
\label{def:mixing}
Let $\tau$ be a random variable on a finite set $\mathcal{T}$ (possibly a single point, in which case all conditioning below is vacuous\footnote{This will be needed to ensure stationarity in the lower-bound construction. For the upper bound the reader may take $\tau$ to be degenerate and ignore the conditioning.}) such that, conditional on $\tau = t$, the single-site law of $(X_s, Y_s)$ is the same for all $s$ and $t$. For finite $S, T \subset \mathbb{Z}^2$ with $d_\infty(S,T) = \min_{s \in S, t' \in T} \|s-t'\|_\infty \geq g$, let $\beta(\sigma_S, \sigma_T \mid \tau{=}t) = \mathbb{E}\big[\sup_{B \in \sigma_T} |P(B \mid \sigma_S, \tau{=}t) - P(B \mid \tau{=}t)| \,\big|\, \tau{=}t\big]$, where $\sigma_S = \sigma(\{(X_s,Y_s): s \in S\})$, and set
\begin{equation}
\beta(g) = \sup_{t \in \mathcal{T}}\; \sup_{\substack{S, T \text{ finite} \\ d_\infty(S,T) \geq g}} \beta(\sigma_S, \sigma_T \mid \tau = t).
\end{equation}
The mixing coefficient measures how close to independent two regions of the image become as you move them apart. For degenerate $\tau$ this is the standard spatial $\beta$-mixing coefficient. The alignment $\tau$ models a global nuisance given which the field mixes (for instance, the unknown phase of the object grid like field boundaries or forest stands relative to the pixel lattice). No analyst knows where scene objects fall relative to the sensor grid, so this is a natural feature of remote sensing data rather than a technical artifact. The single-site condition ensures the risk defined below is independent of $s$ and $\tau$.
\end{definition}

Let $\mathcal{H}$ be a hypothesis class of classifiers $h\colon \mathcal{X} \to \{1,\dots,K\}$ with VC dimension $d_{\mathrm{vc}} < \infty$. Define the 0--1 loss $\ell(h,s) = \mathbbm{1}[h(X_s) \neq Y_s]$ and the true risk $R(h) = \mathbb{E}[\ell(h,s)]$.

For a stationary Gaussian random field with covariance $C(\mathbf{h}) = \sigma^2 e^{-\|\mathbf{h}\|/r}$ (exponential model with range~$r$) and spectral density bounded away from zero, we have $\beta(g) \leq c_0 e^{-c_1 g/r}$ for constants $c_0, c_1 > 0$~\cite{doukhan1994mixing, bradley2007mixing}, with degenerate $\tau$. For the Mat\'ern model with smoothness~$\nu$, polynomial mixing rates $\beta(g) \sim g^{-\alpha}$ hold with $\alpha$ depending on $\nu$ and the spatial dimension.

\section{Upper Bound}
\label{sec:upper}

The challenge is that standard generalization bounds assume independent samples, yet nearby pixels in a satellite image are correlated. Our proof strategy converts the dependent-data problem into an independent-data problem at a controlled cost. We partition the $n \times n$ image into small blocks separated by gaps of width~$g$ (Fig.~\ref{fig:blocking}). If $g$ is large enough relative to the correlation range~$r$, the blocks are nearly independent, and we can apply standard concentration inequalities as if the blocks were i.i.d. samples. The cost is that the gaps consume pixels and an image with $n^2$ pixels yields only $m \approx n^2/g^2$ usable blocks. The proof optimizes this tradeoff where wider gaps give better independence but fewer blocks and shows that the optimal gap is $g = \Theta(r \log n)$, yielding $m = \Theta(n^2/r^2)$ effectively independent blocks up to logarithmic factors.

\begin{figure}[htbp]
\centering
\includegraphics[width=0.8\columnwidth]{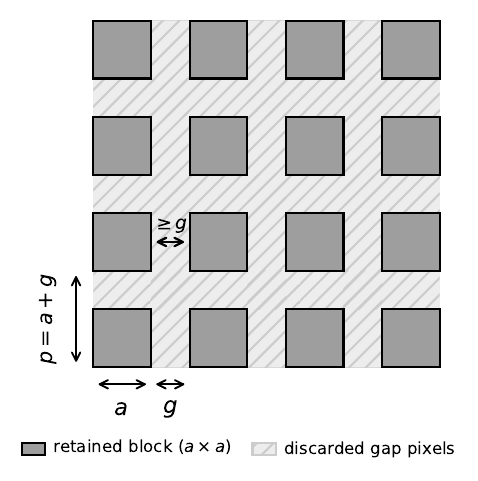}
\caption{Graphical illustration of the blocking construction in Thm.~\ref{thm:upper}.}
\label{fig:blocking}
\end{figure}

Formally, the argument has three components: a coupling lemma that bounds the total variation distance between the dependent blocks and an i.i.d. surrogate (Lemma~\ref{lem:coupling}), a complexity bound on the block-averaged loss class (Lemma~\ref{lem:pdim}), and an optimization over the block geometry.

Fix integers $a \geq 1$ (block side length) and $g \geq 1$ (gap width). Set $p = a + g$. Define non-overlapping blocks
\begin{equation}
B_{ij} = \bigl\{(ip+1,\dots,ip+a) \times (jp+1,\dots,jp+a)\bigr\}
\end{equation}
for $i,j = 0, \dots, M{-}1$, where $M = \lfloor n/p \rfloor$. This yields $m = M^2$ blocks, each containing $a^2$ pixels, with any two distinct blocks separated by $d_\infty(B_{ij}, B_{i'j'}) \geq g$. Define the block empirical risk
\begin{equation}
\hat{R}_{\mathrm{block}}(h) = \frac{1}{m} \sum_{i,j=0}^{M-1} Z_{ij}(h), \quad Z_{ij}(h) = \frac{1}{a^2}\sum_{s \in B_{ij}} \ell(h,s).
\end{equation}
By stationarity, $\mathbb{E}[\hat{R}_{\mathrm{block}}(h)] = R(h)$. The standard telescoping argument~\cite{yu1994rates} extends directly to the 2D setting:

\begin{lemma}[Coupling]
\label{lem:coupling}
Conditionally on $\tau = t$, for every $t \in \mathcal{T}$, there exist i.i.d.\ random variables $\{Z'_{ij}\}$ with $Z'_{ij} \overset{d}{=} Z_{ij}$ such that
\begin{equation}
d_{\TV}\!\bigl(\mathcal{L}(Z_{ij}: i,j),\; \textstyle\prod_{i,j} \mathcal{L}(Z_{ij})\bigr) \leq (m-1)\beta(g).
\end{equation}
\end{lemma}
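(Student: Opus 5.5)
The plan is to run the telescoping argument of Yu~\cite{yu1994rates}, conditionally on $\tau=t$. Fix $t$ and write all laws below conditionally on $\tau=t$.

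Enumerate the $m=M^2$ blocks in some fixed order, say lexicographic in $(i,j)$, as $B_1,\dots,B_m$. Let $W_k=Z_{B_k}$, and let $\sigma_k$ be the $\sigma$-field generated by $\{(X_s,Y_s): s\in B_k\}$. Each $W_k$ is a measurable function of the field restricted to $B_k$. Total variation can only shrink under measurable maps, so it suffices to prove the bound for the joint law of the raw block fields $(\xi_1,\dots,\xi_m)$, where $\xi_k=(X_s,Y_s)_{s\in B_k}$. Write $P_k=\mathcal{L}(\xi_1,\dots,\xi_k)$ and $Q_k=\mathcal{L}(\xi_1,\dots,\xi_{k-1})\otimes\mathcal{L}(\xi_k)$.

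The telescoping runs through the hybrid measures
\[
\mu_k=\mathcal{L}(\xi_1,\dots,\xi_k)\otimes\textstyle\prod_{l>k}\mathcal{L}(\xi_l),\qquad k=1,\dots,m.
\]
Here $\mu_m$ is the true joint law and $\mu_1$ is the full product law. The triangle inequality gives
\[
d_{\TV}(\mu_m,\mu_1)\le\sum_{k=2}^m d_{\TV}(\mu_k,\mu_{k-1}).
\]
The factor $\prod_{l>k}\mathcal{L}(\xi_l)$ is common to $\mu_k$ and $\mu_{k-1}$. Tensoring both measures with the same independent factor does not change their total variation distance. Hence
\[
d_{\TV}(\mu_k,\mu_{k-1})=d_{\TV}(P_k,Q_k).
\]

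The key step is the identity $d_{\TV}(P_k,Q_k)=\beta(\sigma_{S},\sigma_{T}\mid\tau=t)$, where $S=B_1\cup\dots\cup B_{k-1}$ and $T=B_k$. This is the standard characterization of the $\beta$-coefficient as the total variation between a joint law and the product of its marginals, via disintegration: $\mathbb{E}\sup_B|P(B\mid\sigma_S)-P(B)|$ equals $\int d_{\TV}(P_{\xi_k\mid\xi_{<k}},P_{\xi_k})\,dP$, which equals the TV between joint and product. Since $d_\infty(S,T)\ge g$ by the block geometry, this is at most $\beta(g)$. Summing over the $m-1$ terms gives $(m-1)\beta(g)$.

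The i.i.d. claim then needs two checks. Independence holds by construction of the product law. Identical distribution holds because each $B_k$ is a translate of $B_1$, so stationarity given $\tau=t$ makes all $\mathcal{L}(W_k)$ equal. If only single-site invariance is assumed given $\tau$, I would invoke strict stationarity of the conditional field, which is implicit in the upper-bound reading with degenerate $\tau$. One can then realize $Z'_{ij}$ on an enlarged space, if desired, by Berbee's maximal coupling, which gives $P(Z\ne Z')\le(m-1)\beta(g)$.

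I expect the main obstacle to be the measure-theoretic identification of the sup-over-events definition of $\beta$ with the total variation between joint and product laws. This requires regular conditional distributions, which exist because the blocks are finite and the state spaces are Polish. The block-translation stationarity conditional on $\tau$ also needs care. Everything else is routine bookkeeping.
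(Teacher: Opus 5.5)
Your proposal is correct and is the same argument as the paper's: Yu's telescoping over an enumeration of the blocks, with each of the $m-1$ increments bounded by $\beta(g)$ because $\bigcup_{j<k}B_j$ lies at $\ell_\infty$-distance at least $g$ from $B_k$. Your hybrid measures, tensorization step, and identification of $\beta$ with the total variation between joint and product laws make explicit what the paper compresses into one triangle-inequality line. On identical distribution, you are right that something beyond Definition~\ref{def:mixing} is needed when $\tau$ is nondegenerate and $a>1$ (the field of Theorem~\ref{thm:lower} is not stationary given $\tau=t$), but with $a=1$, the only case used in Theorem~\ref{thm:upper}, the single-site condition already makes the $Z_{ij}$ identically distributed given $\tau=t$, which is how the paper handles that point.
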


\begin{proof}
Enumerate the blocks as $Z_1, \dots, Z_m$; all laws below are conditional on $\tau = t$. By the triangle inequality, $d_{\TV}(\mu_0, \mu_m) \leq \sum_{k=2}^m d_{\TV}(\mathcal{L}(Z_k \mid Z_1,\dots,Z_{k-1}), \mathcal{L}(Z_k))$. Each $Z_k$ is $\sigma_{B_k}$-measurable, $\bigcup_{j<k} B_j$ is finite at $\ell_\infty$-distance $\geq g$ from $B_k$, so each term is $\leq \beta(g)$.
\end{proof}

\begin{lemma}[Complexity of Block-Averaged Class]
\label{lem:pdim}
The pseudo-dimension of $\mathcal{F} = \{(x_s,y_s)_{s \in B} \mapsto \frac{1}{a^2}\sum_{s \in B}\ell(h,s) : h \in \mathcal{H}\}$ satisfies $\Pdim(\mathcal{F}) \leq 2 d_{\mathrm{vc}} \log_2(ea^2)$.
\end{lemma}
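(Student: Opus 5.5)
The plan is to bound the pseudo-dimension through a growth-function (Sauer--Shelah) counting argument. Pseudo-shattering is converted into ordinary shattering of the indicator pattern of $\mathcal{H}$ on the pooled pixels.

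Suppose $\mathcal{F}$ pseudo-shatters $k$ block-configurations $\xi^{(1)},\dots,\xi^{(k)}$ with witnesses $r_1,\dots,r_k$. Each $\xi^{(i)} = (x^{(i)}_s,y^{(i)}_s)_{s\in B}$ consists of $a^2$ labelled points. Pseudo-shattering means that all $2^k$ sign patterns $\bigl(\mathbb{1}[f_h(\xi^{(i)}) \ge r_i]\bigr)_{i\le k}$ are realized as $h$ ranges over $\mathcal{H}$.

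The key observation is that $f_h(\xi^{(i)})$ is a deterministic function of the loss vector $\bigl(\ell(h,x^{(i)}_s,y^{(i)}_s)\bigr)_{s\in B}\in\{0,1\}^{a^2}$. Consequently, the whole sign pattern is determined by the loss pattern of $h$ on the pooled multiset of $N=ka^2$ labelled points. The number of distinct sign patterns is therefore at most the number of distinct loss patterns $\Pi_{\ell\circ\mathcal{H}}(N)$ realized by $\mathcal{H}$ on these $N$ points.

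To bound $\Pi_{\ell\circ\mathcal{H}}(N)$, I will use that the loss class $\{(x,y)\mapsto \mathbb{1}[h(x)\neq y]\}$ has VC dimension at most $d_{\mathrm{vc}}$. Here $d_{\mathrm{vc}}$ is interpreted as the VC (or Natarajan/graph) dimension of $\mathcal{H}$ appropriate to the multiclass loss, which is the convention the paper implicitly adopts when it says $\mathcal{H}$ has VC dimension $d_{\mathrm{vc}}$. Sauer--Shelah then gives $\Pi(N)\le (eN/d_{\mathrm{vc}})^{d_{\mathrm{vc}}}$ for $N\ge d_{\mathrm{vc}}$. Combining with the previous step yields the necessary condition
\[
2^k \le \Bigl(\tfrac{e k a^2}{d_{\mathrm{vc}}}\Bigr)^{d_{\mathrm{vc}}},\qquad\text{i.e.}\qquad k \le d_{\mathrm{vc}}\log_2\!\Bigl(\tfrac{e a^2 k}{d_{\mathrm{vc}}}\Bigr).
\]

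The main obstacle, and the only delicate step, is solving this self-referential inequality to reach exactly $2d_{\mathrm{vc}}\log_2(ea^2)$. I would set $u = k/d_{\mathrm{vc}}$ and $A = ea^2 \ge e$, so the condition becomes $u \le \log_2(Au)$. I then argue by contradiction. Suppose $u > 2\log_2 A$. The function $u - \log_2(Au)$ is increasing for $u \ge 1/\ln 2$, so it suffices to check it at $u = 2\log_2 A$. That reduces to showing $2\log_2 A > \log_2 A + \log_2(2\log_2 A)$, i.e.\ $A > 2\log_2 A$, which holds for all $A\ge e$ since $x > 2\log_2 x$ for $x \ge 1$. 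This contradiction gives $k \le 2d_{\mathrm{vc}}\log_2(ea^2)$.

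Finally, I would handle the edge case $N < d_{\mathrm{vc}}$, where the trivial bound $k \le N/a^2 < d_{\mathrm{vc}}$ already suffices. I would also remark that the same counting works verbatim if one prefers the Natarajan-dimension form, at the cost of a $\log K$ factor.
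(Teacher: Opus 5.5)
Your route is the same as the paper's. Pseudo-shattering $k$ blocks forces $2^k$ distinct loss patterns of $\mathcal{H}$ on the $ka^2$ pooled pixels, Sauer--Shelah gives $k\le d_{\mathrm{vc}}\log_2(eka^2/d_{\mathrm{vc}})$, and one then solves for $k$. Your normalization $u=k/d_{\mathrm{vc}}$, $A=ea^2$ is cleaner than the paper's, which invokes an auxiliary implication with $B=ea^2/d_{\mathrm{vc}}$, a quantity that need not be $\ge 1$.

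However, the last step relies on a false claim: $x>2\log_2 x$ does \emph{not} hold for all $x\ge 1$. Equality holds at $x=2$ and $x=4$, and $x<2\log_2 x$ on $(2,4)$. Your $A=ea^2$ lands in this interval exactly when $a=1$, since $e\approx 2.718<2\log_2 e\approx 2.885$. That is the case Theorem~\ref{thm:upper} actually uses. This is not merely an algebraic slip. With $A=e$, the inequality $u\le\log_2(eu)$ is satisfied by $u=3$, because $\log_2(3e)\approx 3.03$. So the relaxed bound $2^k\le(eka^2/d_{\mathrm{vc}})^{d_{\mathrm{vc}}}$ permits $k=3d_{\mathrm{vc}}>2d_{\mathrm{vc}}\log_2 e$, and no manipulation of that inequality can deliver the stated constant at $a=1$.

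The repair is short. For $a\ge 2$ you have $A\ge 4e>4$. There $x-2\log_2 x$ is positive, since it vanishes at $x=4$ and is increasing for $x>2/\ln 2$, so your contradiction argument goes through as written. For $a=1$, argue separately. $\mathcal{F}$ is then the $\{0,1\}$-valued loss class, whose pseudo-dimension equals its VC dimension. Under your convention that is at most $d_{\mathrm{vc}}\le 2d_{\mathrm{vc}}\log_2 e$. Equivalently, use the exact Sauer bound $\sum_{j\le d_{\mathrm{vc}}}\binom{k}{j}<2^k$ for $k>d_{\mathrm{vc}}$ in place of its $(eN/d_{\mathrm{vc}})^{d_{\mathrm{vc}}}$ relaxation.

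The paper's own closing implication, $x\le A\log_2(Bx)\Rightarrow x\le 2A\log_2(AB)$ for $A,B\ge 1$, fails in exactly the same regime (take $A=1$, $B=e$, $x=3$). The published proof needs the same patch.
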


\begin{proof}
Shattering $k$ block-samples with thresholds requires all $2^k$ sign patterns of $\frac{1}{a^2}\sum_s \ell(h,s) - t_i$; each $h$ induces a binary labeling of $ka^2$ pixels, so $2^k \leq S_{\mathcal{H}}(ka^2) \leq (eka^2/d_{\mathrm{vc}})^{d_{\mathrm{vc}}}$ by Sauer's lemma. Then $k \leq d_{\mathrm{vc}}\log_2(eka^2/d_{\mathrm{vc}})$, and $x \leq A\log_2(Bx) \Rightarrow x \leq 2A\log_2(AB)$ for $A,B \geq 1$ gives the claim.
\end{proof}

\begin{remark}[On generality]
We emphasize that the correlation range $r$ and the number of pixels $n^2$ enter the effective sample size $n_{\mathrm{eff}} = \Theta(n^2/r^2)$ independently of the classifier. The hypothesis class affects the bound only through its VC dimension $d_{\mathrm{vc}}$, which multiplies the deviation as $\sqrt{d_{\mathrm{vc}}}$ but does not change the scaling in $r$. The rate is therefore a property of the spatial dependence structure, not of any particular learning algorithm.
\end{remark}

\begin{theorem}[Upper bound]
\label{thm:upper}
Let $\{(X_s,Y_s)\}$ be a strictly stationary random field on $\Lambda_n = \{1,\dots,n\}^2$ satisfying Definition~\ref{def:mixing} with $\beta(g) \leq c_0 e^{-c_1 g/r}$ for constants $c_0,c_1 > 0$ and correlation range $r \geq 1$. Let $\mathcal{H}$ have VC dimension~$d_{\mathrm{vc}}$. For any $\delta \in (0,1)$, choosing block side length $a$ and gap $g$ optimally, with probability at least $1-\delta$:
\begin{equation}
\sup_{h \in \mathcal{H}} \bigl|R(h) - \hat{R}_{\mathrm{block}}(h)\bigr|
\leq C_1 \frac{r \log(n/\delta)}{n}\sqrt{d_{\mathrm{vc}}\log n},
\end{equation}
where $C_1$ depends only on $c_0$ and~$c_1$. Equivalently, the effective sample size is $n_{\mathrm{eff}} = \Theta\!\big(n^2 / (r^2 \log^2 n)\big)$.
\end{theorem}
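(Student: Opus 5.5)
The plan is to condition on $\tau = t$ throughout, prove the bound for each fixed $t$ with probability at least $1-\delta$, and then average over $t$. Since $\mathcal{T}$ is finite and the single-site law, and hence $R(h)$, does not depend on $t$, the unconditional statement follows without any union bound over $t$. Fix $t$. The bad event is
\[
E = \Bigl\{\sup_{h}\bigl|R(h)-\hat R_{\mathrm{block}}(h)\bigr| > \epsilon\Bigr\}.
\]
It is a measurable function of the vector $(Z_{ij}(h))_{i,j}$, or more precisely of the block data $\{(X_s,Y_s)\}_{s\in B_{ij}}$. The telescoping argument of Lemma~\ref{lem:coupling} applies verbatim to the whole block data $\sigma_{B_{ij}}$, not only to the scalar $Z_{ij}$. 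We therefore get
\[
P(E\mid\tau=t)\le P'(E)+(m-1)\beta(g),
\]
where $P'$ makes the $m$ blocks independent copies of a single block, each with the correct marginal law.

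Under $P'$ we have a genuinely i.i.d.\ sample of size $m$ from the block-averaged class $\mathcal{F}$ of Lemma~\ref{lem:pdim}. That class takes values in $[0,1]$, has mean $R(h)$ by stationarity, and satisfies $\Pdim(\mathcal{F})\le D:=2d_{\mathrm{vc}}\log_2(ea^2)$. We invoke the standard uniform convergence bound for $[0,1]$-valued classes of finite pseudo-dimension, obtained via covering numbers (Haussler or Pollard) or via Rademacher complexity plus McDiarmid. It gives, with $P'$-probability at least $1-\delta/2$,
\[
\sup_h \bigl|R(h)-\hat R_{\mathrm{block}}(h)\bigr| \le C\sqrt{\frac{D\log m+\log(2/\delta)}{m}}.
\]
The real structure of the proof is this single i.i.d.\ step applied to $m$ ``super-samples''. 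The averaging within a block can only reduce variance, so we simply ignore any benefit from it.

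Next we choose the geometry. Take $g=\lceil (r/c_1)\log(2c_0 n^2/\delta)\rceil$. Then $(m-1)\beta(g)\le n^2 c_0 e^{-c_1g/r}\le \delta/2$, using $m\le n^2$. The factor $a$ enters only logarithmically through $D$, while $m=\lfloor n/(a+g)\rfloor^2$ is maximized by small $a$. So take $a=g$ (or simply $a=1$). Then $p\le 2g$ and $m\ge (n/(2g))^2/4$ whenever $n\ge 2p$; the regime $n<2p$ makes the claimed bound exceed $1$ and is trivial after enlarging $C_1$. Substituting, and using $\log m\le 2\log n$, $\log(ea^2)\lesssim\log n$, and $g\lesssim r\log(n/\delta)$ with constants depending on $c_0,c_1$, we obtain
\[
\sqrt{\frac{D\log m+\log(2/\delta)}{m}} \lesssim \frac{g}{n}\sqrt{d_{\mathrm{vc}}\log^2 n+\log(1/\delta)}.
\]
This is of the stated form $C_1\frac{r\log(n/\delta)}{n}\sqrt{d_{\mathrm{vc}}\log n}$ up to how the logarithms are bundled. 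The $\log(1/\delta)$ term under the root is absorbed into the prefactor $\log(n/\delta)$, and the $\log a$ factor from $D$ is absorbed by taking $a=1$, which gives $D=O(d_{\mathrm{vc}})$. Reading off $m\asymp n^2/(r^2\log^2 n)$ gives $n_{\mathrm{eff}}$.

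I expect the main obstacle to be the transfer step. Lemma~\ref{lem:coupling} is stated for the scalar laws $\mathcal{L}(Z_{ij})$ for one $h$, but a uniform-in-$h$ event needs the total variation bound on the joint law of the block data. I will restate it at the level of $\sigma_{B_{ij}}$; the telescoping proof is identical, since $\beta(g)$ is defined for $\sigma$-fields. A second point needing care is measurability of the supremum, which I would handle by the usual countable-dense-subclass assumption on $\mathcal{H}$. The bookkeeping of logarithms to match the exact displayed form is routine but fiddly, and I would allow $C_1$ to absorb constant factors.
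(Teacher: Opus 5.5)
Your argument is the paper's proof. You condition on $\tau=t$, take $a=1$ and $g=\Theta(r\log(n/\delta))$ so that $(m-1)\beta(g)\le\delta/2$, pass to an independent surrogate by telescoping, apply the i.i.d.\ pseudo-dimension bound with $d'=d_{\mathrm{vc}}$, and average over $\tau$ using that the single-site law does not depend on $t$. Coupling the full block $\sigma$-fields, rather than the scalars $Z_{ij}(h)$, is the right form of Lemma~\ref{lem:coupling} for a uniform-in-$h$ event, and the paper uses this form implicitly. Do commit to $a=1$, though. Conditionally on a nondegenerate $\tau$ the field need not be stationary, so blocks with $a>1$ need not be identically distributed. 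For the same reason, the mean $R(h)$ comes from the single-site condition, not from stationarity.

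The step that does not go through is the claim that the $\log(1/\delta)$ term ``is absorbed into the prefactor $\log(n/\delta)$.'' With $a=1$ your argument yields
\[
\sup_{h}\bigl|R(h)-\hat R_{\mathrm{block}}(h)\bigr|\lesssim\frac{r\log(n/\delta)}{n}\sqrt{d_{\mathrm{vc}}\log n+\log(2/\delta)},
\]
and $\sqrt{\log(1/\delta)}\le C\sqrt{d_{\mathrm{vc}}\log n}$ holds only when $\delta\ge n^{-C^2 d_{\mathrm{vc}}}$. The trivial regime does not rescue smaller $\delta$. Take $r=d_{\mathrm{vc}}=1$ and $\log(1/\delta)=\sqrt n$: the stated bound is of order $n^{-1/2}\sqrt{\log n}<1$, whereas your choice of $g$ gives $m\asymp n$ and a deviation of order $n^{-1/4}$. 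The loss is built into blocking. The coupling forces $g\gtrsim r\log(1/\delta)$, and the i.i.d.\ step at confidence $\delta/2$ then adds its own $\sqrt{\log(2/\delta)/m}\asymp (g/n)\sqrt{\log(2/\delta)}$. The paper's proof has exactly the same looseness: ``setting~\eqref{eq:combined} equal to $\delta$'' leaves a $\log(16/\delta)$ in the exponent that the statement drops. What both arguments actually establish is the display above. That is, they give the stated form only for $\delta\ge n^{-O(d_{\mathrm{vc}})}$, with $C_1$ depending on that exponent.
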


\begin{proof}
Fix $t \in \mathcal{T}$ and argue conditionally on $\tau = t$. Combining Lemmas~\ref{lem:coupling} and~\ref{lem:pdim} with the standard pseudo-dimension bound for i.i.d.\ bounded variables:
\begin{align}
&P\!\left(\sup_{h \in \mathcal{H}} |R(h) - \hat{R}_{\mathrm{block}}(h)| > \epsilon \,\Big|\, \tau = t\right) \nonumber\\
&\quad\leq 8\!\left(\frac{4em}{d'}\right)^{\!d'}\!\exp\!\left(-\frac{m\epsilon^2}{32}\right) + (m-1)\beta(g)
\label{eq:combined}
\end{align}
where $m = \lfloor n/(a+g)\rfloor^2$ and $d' \leq 2d_{\mathrm{vc}}\log_2(ea^2)$.

\textit{Optimization.} The coupling constraint $(m-1)\beta(g) \leq \delta/2$ requires $g \geq \frac{r}{c_1}\log\frac{2c_0 m}{\delta} = \Theta(r\log(n/\delta))$; the concentration term scales as $\sqrt{d'\log m/m}$. We set $a = 1$: single-pixel blocks may seem counterintuitive, since averaging over larger blocks reduces each block's variance, but larger blocks also inflate the pseudo-dimension (Lemma~\ref{lem:pdim}) and reduce the block count for a given gap, and the variance reduction cannot compensate for these two costs. With $a=1$, $d' = d_{\mathrm{vc}}$ and $m = \Theta(n^2/(r^2\log^2(n/\delta)))$ is maximized.

With $a = 1$, each $Z_{ij}(h) = \ell(h, s_{ij})$ is a single-site loss, whose conditional law is independent of $t$ with mean $R(h)$ by the single-site condition of Definition~\ref{def:mixing}. Hence the coupled surrogates are conditionally i.i.d.\ with mean $R(h)$, the right side of~\eqref{eq:combined} is uniform in $t$, and averaging over $\tau$ preserves the bound. Setting~\eqref{eq:combined} equal to $\delta$ yields the result.
\end{proof}

\section{Lower Bound}
\label{sec:lower}

To show that no algorithm can beat the $\Theta(n^2/r^2)$ effective sample size, we construct a worst-case random field that is stationary, satisfies the mixing condition, and yet contains $\lfloor n/r \rfloor^2$ independent pieces of information. The construction tiles the image with $r \times r$ ``super-pixels,'' each carrying an independent draw from a hard classification instance. All pixels within a super-pixel share the same feature and label, so observing the full $n^2$ pixels is no more informative than observing the $\lfloor n/r \rfloor^2$ super-pixel summaries. A random spatial offset~$\tau$ ensures strict stationarity, and conditionally on $\tau$ the field is block-independent, so the construction satisfies Definition~\ref{def:mixing} with $\tau$ as the latent alignment, the offset is the unknown grid phase that the definition models. Applying Assouad's lemma to the resulting i.i.d.\ super-pixel problem gives a minimax lower bound of $\Omega(r\sqrt{d_{\mathrm{vc}}}/n)$, matching the upper bound up to logarithmic factors.

\begin{theorem}[Minimax lower bound]
\label{thm:lower}
Let $\mathcal{H}$ have VC dimension $d_{\mathrm{vc}} \geq 1$. For any $r \geq 1$ and $n \geq 2r\sqrt{d_{\mathrm{vc}}}$, there exists a strictly stationary random field on $\Lambda_n$ satisfying Definition~\ref{def:mixing} with $\beta(g) = 0$ for all $g \geq r$ hence lying in the model class of Theorem~\ref{thm:upper} such that
\begin{equation}
\inf_{\hat{h}}\; \sup_{P} \;\mathbb{E}\bigl[R(\hat{h}) - R(h^*_P)\bigr] \geq \frac{1}{12}\frac{r\sqrt{d_{\mathrm{vc}}}}{n}
\end{equation}
where the infimum is over all learning algorithms and the supremum is over all distributions within the model class.
\end{theorem}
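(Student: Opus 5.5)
The plan follows the super-pixel construction sketched at the start of this section: reduce to an i.i.d.\ problem with $N=\lfloor n/r\rfloor^2$ samples and apply the classical Assouad-type VC lower bound. Since $d_{\mathrm{vc}}\ge 1$, $\mathcal{H}$ shatters some set $\{x_1,\dots,x_{d}\}$ with $d=d_{\mathrm{vc}}$. For each $\sigma\in\{-1,+1\}^d$ I will define a single-site law $P_\sigma$ on $(X,Y)$. Take $X$ uniform on $\{x_1,\dots,x_d\}$. Given $X=x_k$, let $Y$ be the label that $\mathcal{H}$ assigns to $x_k$ under the ``$\sigma_k$'' pattern with probability $\tfrac12+\gamma$, and the other label with probability $\tfrac12-\gamma$. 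The bias will be $\gamma \asymp \sqrt{d/N}$. The assumption $n\ge 2r\sqrt{d}$ should give $N\gtrsim d$, and this is what keeps $\gamma\le 1/2$.

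Next I build the field. Draw i.i.d.\ pairs $(\xi_{u},\eta_{u})\sim P_\sigma$ indexed by super-pixel coordinates $u\in\mathbb{Z}^2$. Independently draw an offset $\tau$ uniform on $\mathcal{T}=\{0,\dots,r-1\}^2$, and set $(X_s,Y_s)=(\xi_u,\eta_u)$ whenever $s-\tau$ lies in the $r\times r$ cell $u$. Averaging over $\tau$ makes the field strictly stationary, since shifts by multiples of $r$ preserve the i.i.d.\ cell law and the uniform phase absorbs the residual shifts. For every $t$, the conditional single-site law given $\tau=t$ is $P_\sigma$. Given $\tau=t$, two sets at $\ell_\infty$-distance $\ge r$ cannot share a cell and are therefore functions of disjoint independent families, so $\beta(g)=0$ for $g\ge r$. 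This gives $\beta(g)\le c_0e^{-c_1 g/r}$ with, e.g., $c_0=e,c_1=1$, so the field lies in the model class of Theorem~\ref{thm:upper}. The risk $R(h)$ is computed under $P_\sigma$.

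The key reduction is that, given $\tau$, the observation on $\Lambda_n$ is a deterministic function of the values in at most $(\lceil n/r\rceil+1)^2\le 4N'$ cells, where $N'\asymp n^2/r^2$. Any learner on $\Lambda_n$ is therefore a (randomized, via $\tau$) learner on at most $N''=(\lceil n/r\rceil+1)^2$ i.i.d.\ draws from $P_\sigma$, and $\tau$ is independent of $\sigma$, so it only adds randomization. The classical argument then applies. The excess risk is $R(\hat h)-R(h^*_{P_\sigma})= \frac{2\gamma}{d}\sum_k \mathbb{1}[\hat h(x_k)\ne h^*_\sigma(x_k)]$. Assouad's lemma bounds the expected error count below by $\frac d2(1-\max \TV)$ over neighbouring $\sigma,\sigma'$. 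Via Pinsker or Le Cam, $\TV(P_\sigma^{N''},P_{\sigma'}^{N''})\le\sqrt{N''\KL/2}$ with $\KL\lesssim \gamma^2/d\cdot 8$. Choosing $\gamma=c\sqrt{d/N''}$ makes this $\le 1/2$, which yields excess risk $\ge \gamma/2\asymp\sqrt{d/N''}\asymp r\sqrt d/n$.

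I expect the main obstacle to be the constant bookkeeping needed to reach exactly $\tfrac1{12}$. The count $N''$ uses $\lceil n/r\rceil+1$ rather than $n/r$, the relation $n\ge 2r\sqrt d$ must make $\gamma\le 1/2$ compatible, and the Pinsker constants must close. I would first try $N''\le (2n/r)^2$, valid when $n\ge r$, and then tune $c$ so that the final constant is at least $1/12$, using the stated assumption to keep $\gamma\le\tfrac14$ for a clean KL bound $\KL\le 4\gamma^2\cdot\frac{2}{d}$ per sample. A secondary subtlety is the multiclass labels: with $K\ge 2$, the shattering patterns give two labels per point, and those two labels are what carry the binary bias.
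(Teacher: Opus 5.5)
Your construction and reduction are the paper's own: $r\times r$ super-pixels carrying i.i.d.\ draws of a noisy instance on a shattered set, a uniform phase $\tau$ for strict stationarity, independence across cells given $\tau$ so that $\beta(g)=0$ for $g\ge r$, a learner that is handed $\tau$, and Assouad plus Pinsker at $\gamma\asymp\sqrt{d_{\mathrm{vc}}/N}$. Two of your details are more careful than the paper's. First, you fix $\gamma$ through the deterministic count $N''=(\lceil n/r\rceil+1)^2$. The paper's $\gamma=\frac16\sqrt{d_{\mathrm{vc}}/N}$ depends on $\tau$ through $N$, even though the law $P$ must be fixed before $\tau$ is drawn. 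Second, you check membership in the exponential-mixing class via $c_0=e$, $c_1=1$.

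The genuine gap is the step you deferred: no choice of $c$ in your scheme reaches $\frac{1}{12}$.

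\begin{itemize}
\item The per-sample divergence is $8\kappa\gamma^2/d_{\mathrm{vc}}$ with $\kappa=\log\frac{1+2\gamma}{1-2\gamma}\big/(4\gamma)\ge 1$. So the ``clean'' bound $8\gamma^2/d_{\mathrm{vc}}$ is slightly optimistic.
\item Hence $\TV\le 2c\sqrt{\kappa}$ for $\gamma=c\sqrt{d_{\mathrm{vc}}/N''}$. The certified excess risk $\gamma(1-\TV)\ge c(1-2c\sqrt{\kappa})\sqrt{d_{\mathrm{vc}}/N''}$ is then at most $\frac18\sqrt{d_{\mathrm{vc}}/N''}$ for every $c$.
\item With $N''\le(2n/r)^2$ you are capped at $\frac{1}{16}\,r\sqrt{d_{\mathrm{vc}}}/n$. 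That bound on $N''$ holds under the hypothesis $n\ge 2r\sqrt{d_{\mathrm{vc}}}$, though not for all $n\ge r$.
\end{itemize}

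The paper obtains $\frac{1}{12}$ by proving $\frac{1}{12}\sqrt{d_{\mathrm{vc}}/N}$ and then invoking $N\le n^2/r^2$. That is exactly the boundary issue you raised, and the inequality is false for this construction. Given $\tau$, at least $\lceil n/r\rceil^2\ge n^2/r^2$ cells meet $\Lambda_n$, with equality only when $r\mid n$ and the phase is aligned. Every partially visible cell still reveals its whole draw. So both arguments rigorously yield the rate $\Omega(r\sqrt{d_{\mathrm{vc}}}/n)$, but only with a smaller constant.

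To come close to $\frac{1}{12}$ you need one more idea. Since $\tau$ is revealed and independent of the concept, the observation-level $\TV$ is an average over $\tau$ (and over the $J$'s). By Jensen you may therefore calibrate $\gamma=\frac14\sqrt{d_{\mathrm{vc}}/\mathbb{E}_\tau N}$, using $\mathbb{E}_\tau N=\bigl((n+r-1)/r\bigr)^2\le\bigl(3n/(2r)\bigr)^2$ for $n\ge 2r$. This gives $\frac{2-\sqrt{\kappa}}{12}\,r\sqrt{d_{\mathrm{vc}}}/n$ with $\sqrt{\kappa}\le 1.011$. An exact $\frac{1}{12}$ therefore still needs a sharper two-point bound than Pinsker; otherwise the theorem should be stated with a slightly smaller constant.
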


\begin{proof}
Since $\VC(\mathcal{H}) = d_{\mathrm{vc}}$, there exist points $z_1, \dots, z_{d_{\mathrm{vc}}} \in \mathcal{X}$ shattered by $\mathcal{H}$.

\textit{Step 1: Construction.} Let $\tau$ be uniform on $\{0,\dots,r{-}1\}^2$, independent of all other randomness. Define super-pixel assignments $\kappa(s) = \lfloor(s - \tau)/r\rfloor$. For each super-pixel $k$, draw independently $J_k \sim \mathrm{Uniform}\{1,\dots,d_{\mathrm{vc}}\}$ and $\xi_k \sim \mathrm{Bernoulli}(\eta)$, and set $X_s = z_{J_{\kappa(s)}}$, $Y_s = c_{J_{\kappa(s)}} \oplus \xi_{\kappa(s)}$, where $c \in \{0,1\}^{d_{\mathrm{vc}}}$ is the unknown concept.

\textit{Step 2: Stationarity.} Shifting $s$ by $v \in \mathbb{Z}^2$ is equivalent to shifting $\tau$ by $-v$; since $\tau$ is uniform on a period-$r$ grid, the field is strictly stationary.

\textit{Step 3: Mixing.} Condition on $\tau = t$, so $\kappa_t(s) = \lfloor(s-t)/r\rfloor$ is deterministic. The conditional single-site law is that of $(z_J, c_J \oplus \xi)$ with $J \sim \mathrm{Uniform}\{1,\dots,d_{\mathrm{vc}}\}$, $\xi \sim \mathrm{Bernoulli}(\eta)$, independent of $s$ and $t$. If $d_\infty(S,T) \geq r$, the index sets $\kappa_t(S)$ and $\kappa_t(T)$ are disjoint, and since the marks $(J_k,\xi_k)$ are i.i.d.\ across super-pixels, $\beta(g \mid \tau{=}t) = 0$ for $g \geq r$, uniformly in $t$. (Unconditionally the field is \emph{not} mixing since observing one region reveals the grid phase, which is informative about joint laws in distant regions. Conditioning on $\tau$ removes this nuisance.)

\textit{Step 4: Sufficient statistic.} Reveal $\tau$ to the learner; this only strengthens it, so any lower bound survives. Conditionally on $\tau$, the data reduce to $N \geq \lfloor n/r\rfloor^2$ i.i.d.\ pairs $(J_k, Y_k)$, one per super-pixel, and the reduction does not depend on $c$.

\textit{Step 5: Assouad's lemma.} Work conditionally on $\tau$; the excess risk is $\tau$-free by Step~3, so the bound below is uniform in $\tau$ and averaging gives the theorem. The $2^{d_{\mathrm{vc}}}$ concepts form a hypercube. Flipping bit~$j$ changes the excess risk by $\alpha_j = (1-2\eta)/(2d_{\mathrm{vc}})$. The number of super-pixels informative about bit~$j$ is $N_j \sim \mathrm{Binomial}(N, 1/d_{\mathrm{vc}})$, and by Pinsker, $\TV_j \leq \sqrt{N_j\KL(\eta\|1{-}\eta)/2}$. Set $\eta = 1/2 - \gamma$ with $\gamma = \frac{1}{6}\sqrt{d_{\mathrm{vc}}/N} \leq 1/12$ (as $N \geq 4d_{\mathrm{vc}}$), so $\KL(\eta\|1{-}\eta) \leq \frac{(1-2\eta)^2}{\eta(1-\eta)} \leq 17\gamma^2$. Since $\TV_j^2 \leq 17\gamma^2 N_j/2$ given $N_j$ and $\sqrt{\cdot}$ is concave, Jensen gives $\mathbb{E}[\TV_j] \leq \sqrt{17\gamma^2\mathbb{E}[N_j]/2} = \sqrt{17\gamma^2 N/(2d_{\mathrm{vc}})} = \sqrt{17/72} < 1/2$. The expectation form of Assouad's lemma~\cite{tsybakov2009introduction} then gives
\begin{equation}
\inf_{\hat{h}} \max_c \mathbb{E}[R(\hat{h}) - R(h^*_c)]
\geq d_{\mathrm{vc}}\alpha_j \cdot \tfrac{1}{2}
= \tfrac{\gamma}{2}
= \tfrac{1}{12}\sqrt{\tfrac{d_{\mathrm{vc}}}{N}},
\end{equation}
and $N \leq n^2/r^2$ completes the proof.
\end{proof}

Comparing Theorems~\ref{thm:upper} and~\ref{thm:lower}, the upper bound is $\tilde{O}(r\sqrt{d_{\mathrm{vc}}}/n)$ and the lower bound is $\Omega(r\sqrt{d_{\mathrm{vc}}}/n)$: the rates match and the gap is polylogarithmic.

\section{Extensions to Non-Ideal Settings}
\label{sec:extensions}

The results of Sections~\ref{sec:upper}--\ref{sec:lower} assume an isotropic, globally stationary random field. We now relax both. Section~\ref{sec:aniso} extends the bounds to anisotropic fields with directional correlation, and Section~\ref{sec:piecewise} extends them to piecewise stationary fields with spatially varying correlation structure.

\subsection{Anisotropic Fields}
\label{sec:aniso}

Many remote sensing modalities exhibit directional correlation structure. SAR images have different correlation lengths in range and azimuth, and landscapes shaped by rivers, coastlines, or mountain ridges are naturally anisotropic. We replace the isotropic exponential covariance with

\begin{equation}
C(\mathbf{h}) = \sigma^2 \exp\!\Bigl(-\sqrt{h_1^2/r_1^2 + h_2^2/r_2^2}\Bigr),
\end{equation}
where $r_1, r_2 \geq 1$ are the directional correlation ranges. The corresponding mixing coefficient satisfies
\begin{equation}
\label{eq:aniso_mixing}
\beta(g_1, g_2) \leq c_0 \exp\!\Bigl(-c_1\sqrt{g_1^2/r_1^2 + g_2^2/r_2^2}\Bigr),
\end{equation}
where $g_k$ denotes the separation between two regions in direction~$k$.

The blocking construction generalizes by using rectangular blocks and gaps. Fix unit block size $a_1 = a_2 = 1$ and directional gaps $g_k \geq 1$. Define blocks
\begin{equation}
B_{ij} = \{i(1+g_1)+1\} \times \{j(1+g_2)+1\}
\end{equation}
for $i = 0, \dots, M_1{-}1$ and $j = 0, \dots, M_2{-}1$, where $M_k = \lfloor n/(1+g_k) \rfloor$. This yields $m = M_1 M_2$ single-pixel blocks, any two of which are separated by at least $(g_1, g_2)$ in the respective directions.

\begin{theorem}[Anisotropic upper bound]
\label{thm:aniso_upper}
Let $\{(X_s, Y_s)\}$ be a stationary random field on $\Lambda_n$ satisfying Definition~\ref{def:mixing} with directional rate~\eqref{eq:aniso_mixing} and directional correlation ranges $r_1, r_2 \geq 1$. Let $\mathcal{H}$ have VC dimension~$d_{\mathrm{vc}}$. For any $\delta \in (0,1)$, with probability at least $1 - \delta$:
\begin{equation}
\sup_{h \in \mathcal{H}} \bigl|R(h) - \hat{R}_{\mathrm{block}}(h)\bigr|
\leq C_1 \frac{\sqrt{r_1 r_2}\;\log(n/\delta)}{n}\sqrt{d_{\mathrm{vc}}\log n},
\end{equation}
where $C_1$ depends only on $c_0, c_1$. The effective sample size is $n_{\mathrm{eff}} = \Theta\!\bigl(n^2/(r_1 r_2)\bigr)$ up to logarithmic factors.
\end{theorem}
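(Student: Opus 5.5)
We mirror the proof of Theorem~\ref{thm:upper}, replacing square blocks with the rectangular single-pixel blocks $B_{ij}$ defined above. We fix $t\in\mathcal{T}$ and argue conditionally on $\tau=t$. The plan has three steps.

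\textbf{Step 1: coupling with the directional rate.} We first redo Lemma~\ref{lem:coupling} with the directional rate~\eqref{eq:aniso_mixing}. Enumerate the $m=M_1M_2$ sites $s_{ij}=(i(1+g_1)+1,\,j(1+g_2)+1)$. In the telescoping step, for block $k$ we need the distance from $B_k$ to the union of all previously enumerated blocks. Any two distinct sites differ by at least $1+g_1$ in coordinate~1 or by at least $1+g_2$ in coordinate~2. We read the directional mixing coefficient in the natural way: $\beta(g_1,g_2)$ controls the dependence between $\sigma_S$ and $\sigma_T$ whenever every pair $s\in S$, $t'\in T$ is separated in the anisotropic sense, so that $\sqrt{(s_1-t'_1)^2/r_1^2+(s_2-t'_2)^2/r_2^2}\ge \min(g_1/r_1,\,g_2/r_2)$. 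Under this reading each telescoping term is at most $c_0\exp(-c_1\min(g_1/r_1,g_2/r_2))$. We then choose the gaps \emph{balanced}, $g_k=\lceil (r_k/c_1)\log(2c_0m/\delta)\rceil$, so that both directional separations are equal. The total coupling error is then $(m-1)c_0e^{-c_1\min_k g_k/r_k}\le\delta/2$.

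\textbf{Step 2: concentration.} Since $a_1=a_2=1$, Lemma~\ref{lem:pdim} gives pseudo-dimension $d_{\mathrm{vc}}$. Each $Z_{ij}(h)$ is a single-site loss, so by the single-site condition the coupled surrogates are i.i.d.\ with mean $R(h)$, independent of $t$. Inequality~\eqref{eq:combined} therefore holds verbatim, with the concentration term of order $\sqrt{d_{\mathrm{vc}}\log(m/\delta)/m}$. The block count is
\[
m=\Bigl\lfloor \tfrac{n}{1+g_1}\Bigr\rfloor\Bigl\lfloor \tfrac{n}{1+g_2}\Bigr\rfloor=\Theta\!\Bigl(\tfrac{n^2}{r_1r_2\log^2(n/\delta)}\Bigr),
\]
using $m\le n^2$ so that $\log(m/\delta)\lesssim\log(n/\delta)$. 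This requires $n\gtrsim r_k\log(n/\delta)$, i.e.\ $M_k\ge1$; otherwise the bound is trivial by adjusting $C_1$. Substituting gives
\[
\sqrt{d_{\mathrm{vc}}\log n/m}\lesssim \sqrt{r_1r_2}\,\log(n/\delta)\sqrt{d_{\mathrm{vc}}\log n}/n.
\]
This bound is uniform in $t$, so averaging over $\tau$ concludes the proof. The identification $n_{\mathrm{eff}}=\Theta(n^2/(r_1r_2))$ up to logarithms follows by reading off $m$.

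\textbf{Step 3 (main obstacle): the geometry of the telescoping step.} The hard part is justifying that the ``previous'' set in Step~1 is well separated from $B_k$ in the directional sense. A single pair of sites can be far apart in only one coordinate, for example two sites in the same row. The bound~\eqref{eq:aniso_mixing} must then be understood with $\sqrt{g_1^2/r_1^2+g_2^2/r_2^2}$ evaluated at the \emph{minimum} anisotropic distance, which is at least $\min_k g_k/r_k$, rather than requiring separation in both directions at once. We would first try to make this precise by rescaling coordinates, $s\mapsto(s_1/r_1,s_2/r_2)$. The field then becomes isotropic with unit range, in the sense that its mixing is controlled by the Euclidean distance in rescaled coordinates. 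On the rescaled lattice the sites form a grid with spacing $\ge\log(2c_0m/\delta)/c_1$ in both directions, and the isotropic telescoping argument of Lemma~\ref{lem:coupling} applies directly, with $\ell_2\ge\ell_\infty$ distance. If the directional coefficient is only available for product-separated sets, we would instead enumerate the blocks row by row and condition through the two separations in turn. This costs a factor of $2$ in the coupling error, which is absorbed into $C_1$.
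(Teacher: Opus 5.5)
Your proposal is essentially the paper's proof: condition on $\tau$, couple the single-pixel blocks through the telescoping argument of Lemma~\ref{lem:coupling} with gaps $g_k=\Theta(r_k\log(n/\delta))$, take $d'=d_{\mathrm{vc}}$ from Lemma~\ref{lem:pdim}, count $m=\Theta\bigl(n^2/(r_1r_2\log^2(n/\delta))\bigr)$, substitute into~\eqref{eq:combined}, and average over $\tau$. Your Step~3 is more careful than the paper, which asserts that every pair of blocks is separated by $(g_1,g_2)$ even though sites sharing a row or column differ in only one coordinate; reading~\eqref{eq:aniso_mixing} through the minimum rescaled distance (exponent $c_1\min_k g_k/r_k$) is the right repair and costs only a constant. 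Two small slips remain: passing from $\sqrt{d_{\mathrm{vc}}\log(m/\delta)/m}$ to $\sqrt{d_{\mathrm{vc}}\log n/m}$ silently drops a $\log(1/\delta)$ term (the paper does the same), and when only one $r_k$ is of order $n/\log(n/\delta)$ or larger the bound is not trivial, so you should keep $M_k\ge 1$ rather than appeal to triviality.
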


\begin{proof}
The coupling lemma (Lemma~\ref{lem:coupling}) applies with the same telescoping argument where each pair of blocks is separated by $(g_1, g_2)$, so each coupling step incurs error at most $\beta(g_1, g_2)$. The constraint $(m-1)\beta(g_1,g_2) \leq \delta/2$ is satisfied by setting $g_k = \Theta(r_k \log(n/\delta))$ for each $k$. Since $a_1 = a_2 = 1$, the pseudo-dimension bound (Lemma~\ref{lem:pdim}) gives $d' = d_{\mathrm{vc}}$. The number of blocks is
\begin{equation}
m = \Bigl\lfloor \frac{n}{1+g_1}\Bigr\rfloor \Bigl\lfloor \frac{n}{1+g_2}\Bigr\rfloor = \Theta\!\left(\frac{n^2}{r_1r_2\log^2(n/\delta)}\right).
\end{equation}
Substituting into~\eqref{eq:combined} yields the stated bound. For nondegenerate $\tau$, condition as in Theorem~\ref{thm:upper}. Since $a_1 = a_2 = 1$, the argument is identical.
\end{proof}

\begin{theorem}[Anisotropic lower bound]
\label{thm:aniso_lower}
Under the conditions of Theorem~\ref{thm:lower} with directional ranges $r_1, r_2 \geq 1$ and $n \geq 2\max(r_1,r_2)\sqrt{d_{\mathrm{vc}}}$:
\begin{equation}
\inf_{\hat{h}}\;\sup_P\; \mathbb{E}\bigl[R(\hat{h}) - R(h^*_P)\bigr] \geq \frac{1}{12}\frac{\sqrt{r_1 r_2 d_{\mathrm{vc}}}}{n}.
\end{equation}
\end{theorem}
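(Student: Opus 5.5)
The plan is to mirror the proof of Theorem~\ref{thm:lower}, replacing the $r\times r$ super-pixels with $r_1\times r_2$ rectangular super-pixels. Fix shattered points $z_1,\dots,z_{d_{\mathrm{vc}}}$. Let $\tau$ be uniform on $\{0,\dots,r_1-1\}\times\{0,\dots,r_2-1\}$. Define the rectangular cell index
\[
\kappa(s) = \bigl(\lfloor (s_1-\tau_1)/r_1\rfloor,\ \lfloor (s_2-\tau_2)/r_2\rfloor\bigr).
\]
On each cell place an independent mark $(J_k,\xi_k)$, exactly as in Step~1 of Theorem~\ref{thm:lower}.

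Stationarity follows as in Step~2. A shift by $v$ amounts to shifting $\tau$ componentwise modulo $(r_1,r_2)$, and the uniform law on the rectangular period grid is invariant under this.

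For mixing, condition on $\tau=t$. Two sets separated by at least $r_1$ in direction~1, or by at least $r_2$ in direction~2, touch disjoint cells, so the conditional $\beta$ vanishes. Hence $\beta(g_1,g_2)=0$ whenever $g_1\ge r_1$ or $g_2\ge r_2$. This satisfies the directional rate~\eqref{eq:aniso_mixing} for suitable $c_0,c_1$, since the bound is only required to be nontrivial at small separations, and we may take $c_0\ge 1$. The conditional single-site law is again $(z_J,c_J\oplus\xi)$, which is free of $s$ and $t$.

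Next, reveal $\tau$ to the learner, as in Step~4. The sample then reduces to $N$ i.i.d.\ pairs, one per cell. Here
\[
\lfloor n/r_1\rfloor\lfloor n/r_2\rfloor \le N \le \lceil n/r_1\rceil\lceil n/r_2\rceil,
\]
and the upper estimate is at most $4n^2/(r_1r_2)$.

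Step~5 (Assouad) carries over verbatim with this $N$. It requires only $N\ge 4d_{\mathrm{vc}}$, so that $\gamma=\frac16\sqrt{d_{\mathrm{vc}}/N}\le 1/12$. This follows from $n\ge 2\max(r_1,r_2)\sqrt{d_{\mathrm{vc}}}$: each factor satisfies $\lfloor n/r_k\rfloor\ge 2\sqrt{d_{\mathrm{vc}}}$ up to rounding. It yields the bound $\frac1{12}\sqrt{d_{\mathrm{vc}}/N}$.

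The main obstacle is the final conversion, $N\le n^2/(r_1r_2)$. Cells cut by the boundary of $\Lambda_n$ inflate $N$ beyond $n^2/(r_1r_2)$ when $r_k\nmid n$. The isotropic proof glosses over the same issue.

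I would first handle this by letting partial cells count as the same cell, which is automatic since there is one mark per cell. I would then note that, for the learner, $N$ is exactly the number of distinct cells meeting $\Lambda_n$, which is at most $(n/r_1+1)(n/r_2+1)$. Under the hypothesis $n\ge 2r_k$, this is at most $\frac94\, n^2/(r_1r_2)$.

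The safest route is to accept a constant loss, giving $\frac{1}{18}$ instead of $\frac1{12}$. To recover exactly $\frac1{12}$, I would instead shrink $\gamma$'s constant slightly, or restrict to $r_k\mid n$ as the isotropic proof implicitly does. The constant is immaterial for the claimed rate $\sqrt{r_1r_2 d_{\mathrm{vc}}}/n$.
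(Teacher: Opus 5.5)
Your construction, the stationarity and mixing checks, the reduction by revealing $\tau$, and the Assouad step are the same as the paper's proof. Your mixing claim, $\beta=0$ as soon as $g_1\ge r_1$ or $g_2\ge r_2$, is in fact what places the field in the directional class of Theorem~\ref{thm:aniso_upper}. The paper's remark about $d_\infty\ge\max(r_1,r_2)$ does not give this on its own. Note that the rate then needs $c_0\ge e^{\sqrt{2}c_1}$, not merely $c_0\ge 1$, because for $g_k<r_k$ the exponent can approach $\sqrt{2}$.

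You are also right that the paper's count $N=\lfloor n/r_1\rfloor\lfloor n/r_2\rfloor\le n^2/(r_1r_2)$ ignores cells cut by the boundary. Every cell meeting $\Lambda_n$ reveals its whole mark, so given $\tau$ the learner sees $N(\tau)=N_1(\tau_1)N_2(\tau_2)$ pairs, where $N_k(\tau_k)=\lfloor (n-\tau_k)/r_k\rfloor-\lfloor (1-\tau_k)/r_k\rfloor+1$. Even when $r_k\mid n$ (and $r_k\ge 2$), this equals $n/r_k+1$ for every $\tau_k\neq 1$. So neither the paper's argument nor yours delivers the constant $\tfrac{1}{12}$ with this construction.

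\textbf{Problems in your repair.}
\begin{itemize}
\item Neither $\lceil n/r_k\rceil$ nor $n/r_k+1$ bounds $N_k(\tau_k)$. For $r_k\ge 2$, $n=2r_k$, $\tau_k=0$ there are $3>\lceil n/r_k\rceil$ cells. For $r_k=3$, $n=8$, $\tau_k=2$ there are $4>n/r_k+1$.
\item The worst case is $\lfloor (n-2)/r_k\rfloor+2\le 2n/r_k$ (using $n\ge 2r_k$). A bound uniform in $\tau$ therefore gives $N\le 4n^2/(r_1r_2)$ and the constant $\tfrac{1}{24}$, not $\tfrac{1}{18}$.
\item Your factor $\tfrac94$ survives only on average. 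Each $s\in\{2,\dots,n\}$ starts a new cell with probability $1/r_k$, so $\mathbb{E}N_k=1+(n-1)/r_k\le\tfrac32\,n/r_k$. Because $\tau$ has the same law under every hypothesis, the total variation between the joint laws of $(\tau,\text{data})$ is the $\tau$-average of the conditional ones. The paper's Jensen step therefore runs with $\mathbb{E}N$.
\item In either version, $\gamma$ is part of the distribution. It must be fixed from a deterministic quantity ($\max_\tau N(\tau)$ or $\mathbb{E}N$), not from the random $N(\tau)$.
\item Your two routes to exactly $\tfrac{1}{12}$ fail. Shrinking the constant in $\gamma$ only lowers the bound $\gamma/2$, and $\tfrac16$ is already essentially optimal for $c\,(1-c\sqrt{17/2})$. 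That leaves only about 3\% slack, not a factor $\tfrac32$ or $2$. Restricting to $r_k\mid n$ does not remove boundary cells, as shown above.
\item $N\ge 4d_{\mathrm{vc}}$ should come from $N_k(\tau_k)\ge\lceil n/r_k\rceil\ge 2\sqrt{d_{\mathrm{vc}}}$. With floors it can genuinely fail: $d_{\mathrm{vc}}=2$, $r_1=r_2=10$, $n=29$ gives $\lfloor n/r_1\rfloor\lfloor n/r_2\rfloor=4<8$.
\end{itemize}
With these corrections your argument proves the rate $\sqrt{r_1r_2d_{\mathrm{vc}}}/n$, with constant $\tfrac{1}{18}$ (averaging over $\tau$) or $\tfrac{1}{24}$ (worst case over $\tau$). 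It does not prove the stated $\tfrac{1}{12}$.
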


\begin{proof}
Modify the construction of Theorem~\ref{thm:lower} by using rectangular super-pixels. Let $\tau$ be uniform on $\{0,\dots,r_1{-}1\} \times \{0,\dots,r_2{-}1\}$ and define super-pixel assignments $\kappa(s) = (\lfloor(s_1-\tau_1)/r_1\rfloor, \lfloor(s_2-\tau_2)/r_2\rfloor)$. Each super-pixel has size $r_1 \times r_2$. Stationarity follows from the uniformity of~$\tau$ (shifting by $v$ is equivalent to shifting $\tau$ by $-v$, which preserves the distribution since $\tau_k$ is uniform on a period-$r_k$ grid). Conditionally on $\tau$, super-pixel indices of $S$ and $T$ are disjoint whenever $d_\infty(S,T) \geq \max(r_1,r_2)$, giving $\beta(g \mid \tau) = 0$ for $g \geq \max(r_1,r_2)$ uniformly in $\tau$; the learner is augmented with $\tau$ as in Step~4 of Theorem~\ref{thm:lower}. The number of super-pixels is $N = \lfloor n/r_1\rfloor \cdot \lfloor n/r_2\rfloor \leq n^2/(r_1 r_2)$. The Assouad argument proceeds identically, yielding $\Omega(\sqrt{d_{\mathrm{vc}}/N}) = \Omega(\sqrt{r_1 r_2d_{\mathrm{vc}}}/n)$.
\end{proof}

Theorems~\ref{thm:aniso_upper} and~\ref{thm:aniso_lower} together establish that $n_{\mathrm{eff}} = \Theta(n^2/(r_1 r_2))$ for anisotropic fields, matching up to polylogarithmic factors. The isotropic result is recovered when $r_1 = r_2 = r$.

The practical implication for spatial cross-validation is that block separation should be direction-dependent where folds should be separated by $g_k = \Theta(r_k \log n)$ in direction~$k$, leading to elongated validation blocks aligned with the anisotropy. For SAR imagery, where the range direction may have $r_1 = 50$ pixels and the azimuth $r_2 = 10$, this yields $n_{\mathrm{eff}} = n^2/500$ rather than the isotropic estimate $n^2/2500$ obtained by using $r_{\max} = 50$ in both directions.

\subsection{Piecewise Stationary Fields}
\label{sec:piecewise}

Real remote sensing scenes are rarely globally stationary, and correlation structure varies across land cover types, and the range fitted over cropland may differ substantially from that over forest or urban areas. We extend the theory to fields that are stationary within each of $K$ homogeneous regions.

\begin{definition}[Piecewise stationary field]
\label{def:piecewise}
A random field $\{(X_s, Y_s)\}$ on $\Lambda_n$ is \emph{$(K, \{r_k\})$-piecewise stationary} if there exists a partition $\Lambda_n = \bigcup_{k=1}^K \Lambda_k$ such that, within each region $\Lambda_k$ (containing $|\Lambda_k| = n_k^2$ pixels), the field is stationary and satisfies Definition~\ref{def:mixing} (alignment $\tau_k$) with $\beta_k(g) \leq c_0 e^{-c_1 g/r_k}$, and, conditionally on the tuple $\tau = (\tau_1,\dots,\tau_K)$, the restrictions of the field to distinct regions are mutually independent. 
\end{definition}

Without global stationarity, the expected loss $\mathbb{E}[\ell(h,s)]$ depends on $s$. The natural target becomes the domain-average risk:
\begin{equation}
\label{eq:rbar}
\bar{R}(h) = \frac{1}{n^2}\sum_{s \in \Lambda_n} \mathbb{E}[\ell(h,s)] = \sum_{k=1}^K w_k R_k(h),
\end{equation}
where $w_k = n_k^2/n^2$ and $R_k(h) = \mathbb{E}_{s \sim \Lambda_k}[\ell(h,s)]$ is the within-region risk.

Within each region~$\Lambda_k$, we construct blocks with gap $g_k = \Theta(r_k \log(n/\delta))$, yielding $m_k = \Theta(n_k^2 / (r_k^2 \log^2 n))$ blocks. The per-region block empirical risk is $\hat{R}_k^{\mathrm{block}}(h) = \frac{1}{m_k}\sum_{(i,j) \in \Lambda_k} Z_{ij}^{(k)}(h)$, and the overall estimator is
\begin{equation}
\hat{\bar{R}}(h) = \sum_{k=1}^K w_k\hat{R}_k^{\mathrm{block}}(h).
\end{equation}
By per-region stationarity, $\mathbb{E}[\hat{\bar{R}}(h)] = \bar{R}(h)$.

\begin{theorem}[Piecewise stationary upper bound]
\label{thm:piecewise_upper}
Let $\{(X_s, Y_s)\}$ be a $(K, \{r_k\})$-piecewise stationary field on $\Lambda_n$, with each region $\Lambda_k$ containing $n_k^2$ pixels and having correlation range $r_k \geq 1$. Let $\mathcal{H}$ have VC dimension~$d_{\mathrm{vc}}$. For any $\delta \in (0,1)$, with probability at least $1 - \delta$:
\begin{equation}
\label{eq:piecewise_bound}
\sup_{h \in \mathcal{H}} \bigl|\bar{R}(h) - \hat{\bar{R}}(h)\bigr|
\leq C_2\frac{\log(n/\delta)}{n^2}\sqrt{d_{\mathrm{vc}}\log n \cdot \sum_{k=1}^K n_k^2r_k^2},
\end{equation}
where $C_2$ depends only on $c_0, c_1$. The effective sample size is
\begin{equation}
\label{eq:neff_piecewise}
n_{\mathrm{eff}} = \frac{n^4}{\sum_{k=1}^K n_k^2r_k^2}
\end{equation}
up to logarithmic factors.
\end{theorem}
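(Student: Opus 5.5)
We prove the bound region by region, reducing each region to Theorem~\ref{thm:upper}, and then combine the regions with the weights $w_k$ via a union bound. The one thing we need from the combination is that the per-region deviations add in the right way. Adding them linearly would give $\sum_k w_k r_k/n_k$, which exceeds the claimed $\sqrt{\sum_k w_k^2 r_k^2/n_k^2}$. To obtain the root-sum-of-squares form we must exploit the conditional independence across regions in Definition~\ref{def:piecewise}.

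\textit{Step 1 (per-region coupling).} Fix $\tau=(\tau_1,\dots,\tau_K)$. Within $\Lambda_k$, take single-pixel blocks ($a=1$) on a grid with gap $g_k = \lceil (r_k/c_1)\log(2c_0 n^2 K/\delta)\rceil$. This gives $m_k=\Theta(n_k^2/(r_k^2\log^2(n/\delta)))$ sites; we assume $\Lambda_k$ contains such a grid, as the paper implicitly does. Lemma~\ref{lem:coupling} applies within each region, with conditional laws given $\tau_k$, and yields an i.i.d.\ surrogate up to TV error $(m_k-1)\beta_k(g_k)\le \delta/(2K)$. Because the regions are conditionally independent given $\tau$, the product of these couplings is a coupling of the whole collection $\{Z^{(k)}_{ij}\}_{k,i,j}$ with a family of independent variables. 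The total TV error is at most $\sum_k (m_k-1)\beta_k(g_k)\le\delta/2$. In the surrogate family, the variables within region $k$ are i.i.d.\ with mean $R_k(h)$, by the single-site condition, exactly as in the proof of Theorem~\ref{thm:upper}.

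\textit{Step 2 (uniform concentration of the weighted sum).} On the surrogate, $\hat{\bar R}(h)=\sum_k\sum_{ij}(w_k/m_k)\,\ell(h,s^{(k)}_{ij})$ is a weighted sum of independent $[0,1]$ variables. The weight of each variable in region $k$ is $c_k:=w_k/m_k$, so
\[
\textstyle\sum_k m_k c_k^2=\sum_k w_k^2/m_k \asymp \log^2(n/\delta)\,\sum_k n_k^2r_k^2/n^4 .
\]
For fixed $h$, Hoeffding's inequality for weighted sums gives a deviation of order $\sqrt{\sum_k w_k^2/m_k\cdot\log(1/\delta)}$. To make this uniform over $\mathcal H$, we symmetrize with a ghost sample and apply Rademacher signs. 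The projection of $\mathcal H$ onto the $\sum_k m_k\le n^2$ sample points has at most $(en^2/d_{\mathrm{vc}})^{d_{\mathrm{vc}}}$ patterns by Sauer's lemma. Applying weighted Hoeffding to each pattern and taking a union bound over patterns gives, with probability $1-\delta/2$,
\[
\sup_h|\hat{\bar R}(h)-\bar R(h)|\lesssim \sqrt{\textstyle\sum_k w_k^2/m_k}\;\sqrt{d_{\mathrm{vc}}\log n+\log(1/\delta)}.
\]
Substituting $m_k$ then produces $\frac{\log(n/\delta)}{n^2}\sqrt{d_{\mathrm{vc}}\log n\sum_k n_k^2r_k^2}$, after absorbing the $\log(1/\delta)$ term into the existing logarithmic factors.

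\textit{Step 3 (assembly).} We add the coupling failure probability $\delta/2$ to the concentration failure probability $\delta/2$. The resulting bound is uniform in $\tau$, so averaging over $\tau$ preserves it. The effective sample size $n_{\mathrm{eff}}$ is the $N$ for which the bound equals $\sqrt{d_{\mathrm{vc}}/N}$ up to logarithms, namely $n^4/\sum_k n_k^2r_k^2$.

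The main obstacle is Step~2. The symmetrization must be carried out with non-identical weights, and the pseudo-dimension route of \eqref{eq:combined} no longer applies directly because the surrogate variables are not identically distributed across regions. We handle this with the direct Sauer-plus-weighted-Hoeffding argument above rather than by invoking \eqref{eq:combined} once per region. The reason is that per-region invocation recovers only the weaker linear sum, and it costs a $\log K$ from the union bound over regions.
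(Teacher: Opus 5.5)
Your proposal is correct and follows essentially the same route as the paper. Both use per-region single-pixel blocking with gaps $g_k=\Theta(r_k\log(nK/\delta))$ and total coupling error $\sum_k(m_k-1)\beta_k(g_k)\le\delta/2$, followed by a weighted Hoeffding bound with variance proxy $\sum_k w_k^2/m_k$ (where cross-region conditional independence enters), made uniform over $\mathcal{H}$ by symmetrization and a Sauer/covering count over all $\sum_k m_k$ sites. Your absorption of $\log(1/\delta)$ into the logarithmic factors is the same informal step the paper takes when it ``solves for $\epsilon$'' (strictly it needs $\log(1/\delta)\lesssim d_{\mathrm{vc}}\log n$), so it is not a gap relative to the paper's argument.
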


\begin{proof}
Within each region $\Lambda_k$, apply the coupling lemma (Lemma~\ref{lem:coupling}) with gap $g_k = \Theta(r_k \log(nK/\delta))$ to replace the $m_k$ dependent block variables $\{Z_{ij}^{(k)}\}$ with independent copies. The total coupling error across all regions is at most $\sum_k (m_k - 1)\beta_k(g_k) \leq \delta/2$ by setting the gaps appropriately.

After coupling, $\hat{\bar{R}}(h) - \bar{R}(h)$ is a weighted sum of independent bounded random variables. Specifically, for fixed~$h$:
\begin{equation}
\hat{\bar{R}}(h) - \bar{R}(h) = \sum_{k=1}^K \frac{w_k}{m_k} \sum_{(i,j) \in \Lambda_k} \bigl(Z_{ij}^{(k)}(h) - R_k(h)\bigr).
\end{equation}
Each term $w_k Z_{ij}^{(k)}/m_k$ is bounded in $[0, w_k/m_k]$. By Hoeffding's inequality for independent (not necessarily identically distributed) bounded random variables:
\begin{equation}
P\bigl(|\hat{\bar{R}}(h) - \bar{R}(h)| > \epsilon\bigr) \leq 2\exp\!\left(-\frac{2\epsilon^2}{\sum_{k=1}^K w_k^2/m_k}\right).
\end{equation}

For uniform convergence over $\mathcal{H}$, we use the standard symmetrization and covering number argument, which only requires independence and boundedness. Since $a = 1$ (unit blocks), the loss class has pseudo-dimension $d' = d_{\mathrm{vc}}$ by Lemma~\ref{lem:pdim}. The total number of blocks is $m = \sum_k m_k$, and the covering number of the VC class at scale $\epsilon$ satisfies $\mathcal{N}(\epsilon) \leq (em/d_{\mathrm{vc}})^{d_{\mathrm{vc}}}$. Combining via a union bound over an $\epsilon/2$-cover:
\begin{align}
&P\!\left(\sup_{h \in \mathcal{H}} |\bar{R}(h) - \hat{\bar{R}}(h)| > \epsilon\right) \nonumber\\
&\quad \leq 2\!\left(\frac{2em}{d_{\mathrm{vc}}}\right)^{\!d_{\mathrm{vc}}} \exp\!\left(-\frac{\epsilon^2/2}{\sum_k w_k^2/m_k}\right) + \frac{\delta}{2}.
\label{eq:piecewise_combined}
\end{align}

Substituting $w_k = n_k^2/n^2$ and $m_k = \Theta(n_k^2/(r_k^2\log^2 n))$:
\begin{equation}
\sum_{k=1}^K \frac{w_k^2}{m_k} = \sum_{k=1}^K \frac{n_k^4}{n^4} \cdot \frac{r_k^2\log^2 n}{n_k^2} = \frac{\log^2 n}{n^4}\sum_{k=1}^K n_k^2 r_k^2.
\end{equation}
Setting~\eqref{eq:piecewise_combined} equal to $\delta$ and solving for $\epsilon$ gives the stated bound.
\end{proof}

The effective sample size~\eqref{eq:neff_piecewise} has a natural interpretation. It is the reciprocal of the weighted variance $\sum_k w_k^2/m_k$. Regions with large correlation range $r_k$ contribute fewer effective samples per unit area and thereby reduce the overall effective sample size, even if they occupy a small fraction of the image.

\begin{theorem}[Piecewise stationary lower bound]
\label{thm:piecewise_lower}
Under the conditions of Theorem~\ref{thm:piecewise_upper}, for any region $\Lambda_j$ with $n_j \geq 2r_j\sqrt{d_{\mathrm{vc}}}$:
\begin{equation}
\inf_{\hat{h}}\;\sup_P\;\mathbb{E}\bigl[\bar{R}(\hat{h}) - \bar{R}(h^*_P)\bigr] \geq \frac{w_j}{12}\frac{r_j\sqrt{d_{\mathrm{vc}}}}{n_j} = \frac{n_jr_j\sqrt{d_{\mathrm{vc}}}}{12n^2}.
\end{equation}
In particular, taking the maximum over regions:
\begin{equation}
\label{eq:piecewise_lower}
\inf_{\hat{h}}\;\sup_P\;\mathbb{E}\bigl[\bar{R}(\hat{h}) - \bar{R}(h^*_P)\bigr] \geq \frac{\sqrt{d_{\mathrm{vc}}}}{12n^2}\max_{1 \leq k \leq K} n_kr_k.
\end{equation}
\end{theorem}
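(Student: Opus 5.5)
The plan is to reduce to the lower-bound construction of Theorem~\ref{thm:lower}, applied inside the single region $\Lambda_j$. All other regions are made uninformative and fixed.

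\textbf{Construction.} On $\Lambda_j$ we place the super-pixel field of Theorem~\ref{thm:lower}, with range $r_j$, alignment $\tau_j$ uniform on $\{0,\dots,r_j-1\}^2$, shattered points $z_1,\dots,z_{d_{\mathrm{vc}}}$ and unknown concept $c\in\{0,1\}^{d_{\mathrm{vc}}}$. On every other region $\Lambda_k$ with $k\neq j$ we take a field whose law does not depend on $c$, drawn independently given $\tau$; for instance, i.i.d.\ pixels with a fixed feature and a label that is independent of the feature. We then check Definition~\ref{def:piecewise}. Region $j$ is stationary and mixing with $\beta_j(g)=0$ for $g\ge r_j$, by Steps~2--3 of Theorem~\ref{thm:lower}. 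The other regions trivially have $\beta_k\equiv 0$. Cross-region independence given $\tau$ holds by construction. A small caveat is that $\Lambda_j$ is a region of $n_j^2$ pixels rather than the full lattice. I will take it to be an $n_j\times n_j$ square, or more generally assume it contains $\lfloor n_j/r_j\rfloor^2$ full super-pixels. This gives $N\le n_j^2/r_j^2$ and $N\ge 4d_{\mathrm{vc}}$ from $n_j\ge 2r_j\sqrt{d_{\mathrm{vc}}}$.

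\textbf{Reduction of the risk.} By \eqref{eq:rbar}, $\bar R(h)=\sum_k w_kR_k(h)$. For $k\neq j$ the risk $R_k(h)$ does not depend on $c$. It may depend on $h$, but the label there is independent of the feature with probability $1/2$, so $R_k(h)=1/2$ for every $h$, provided that region uses features $z_1$ with fair-coin labels. Hence $\bar R(\hat h)-\bar R(h^*_c)=w_j\,(R_j(\hat h)-R_j(h^*_c))$, and the minimizer of $\bar R$ coincides with that of $R_j$. Data from the regions $k\neq j$ have a $c$-free law, so by the Step~4 reasoning they are ancillary: the learner may simulate them itself. We also reveal $\tau$. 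The learner therefore faces exactly the i.i.d.\ super-pixel problem of Theorem~\ref{thm:lower} with $N$ samples.

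\textbf{Conclusion.} Step~5 of Theorem~\ref{thm:lower} (Assouad with $\gamma=\tfrac16\sqrt{d_{\mathrm{vc}}/N}$) gives $\inf_{\hat h}\max_c\mathbb E[R_j(\hat h)-R_j(h^*_c)]\ge \tfrac1{12}\sqrt{d_{\mathrm{vc}}/N}\ge \tfrac1{12}r_j\sqrt{d_{\mathrm{vc}}}/n_j$. Multiplying by $w_j=n_j^2/n^2$ yields $n_jr_j\sqrt{d_{\mathrm{vc}}}/(12n^2)$. The bound holds for every admissible $j$, and the supremum over $P$ includes each construction, so \eqref{eq:piecewise_lower} follows by taking the best $j$. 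This tacitly requires every region to satisfy $n_k\ge 2r_k\sqrt{d_{\mathrm{vc}}}$, or else the max is restricted to those regions. The main obstacle is the bookkeeping for the non-informative regions. Their risk must be exactly constant in $h$, so that the excess risk factorizes with the weight $w_j$ and is not diluted. Their data must also carry no information about $c$. The fair-coin-label choice handles both at once.
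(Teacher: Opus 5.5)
Your proposal is correct and is essentially the paper's argument: plant the super-pixel instance of Theorem~\ref{thm:lower} in $\Lambda_j$, make the other regions trivial so that the excess risk factors as $w_j\bigl(R_j(\hat h)-R_j(h^*)\bigr)$, multiply the single-region rate by $w_j=n_j^2/n^2$, and maximize over admissible $j$. Your fair-coin construction for the other regions, the ancillarity argument for their data, and the restriction of the maximum to regions with $n_k\ge 2r_k\sqrt{d_{\mathrm{vc}}}$ spell out what the paper leaves implicit; the only loose step, $N\le n_j^2/r_j^2$, comes from Theorem~\ref{thm:lower} (with a random offset an $n_j\times n_j$ square can meet more than $n_j^2/r_j^2$ partial super-pixels) and changes only the constant.
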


\begin{proof}
For a chosen region $\Lambda_j$, construct a distribution in which the classification task within $\Lambda_j$ follows the super-pixel construction of Theorem~\ref{thm:lower} with range $r_j$, and the task is trivial in all other regions (i.e. $R_k(h) = R_k(h^*)$ for $k \neq j$ under every hypothesis). Then $\bar{R}(\hat{h}) - \bar{R}(h^*) = w_j(R_j(\hat{h}) - R_j(h^*))$, and the minimax rate for estimating $R_j$ over $\Lambda_j$ alone is $\Omega(r_j\sqrt{d_{\mathrm{vc}}}/n_j)$ by Theorem~\ref{thm:lower}. Multiplying by $w_j = n_j^2/n^2$ gives the per-region lower bound. Taking the maximum over $j$ yields~\eqref{eq:piecewise_lower}.
\end{proof}

Comparing the upper bound~\eqref{eq:piecewise_bound} and the lower bound~\eqref{eq:piecewise_lower}, the upper bound depends on $\sqrt{\sum_k n_k^2 r_k^2}$ and the lower bound on $\max_k n_k r_k$. Since $\max_k n_k r_k \leq \sqrt{\sum_k n_k^2 r_k^2} \leq \sqrt{K}\cdot\max_k n_k r_k$, the gap between the bounds is at most $\sqrt{K}$, which is a small constant for typical land cover classifications ($K \leq 10$).

\begin{remark}[Practical interpretation]
\label{rem:piecewise_practical}
Given a scene with estimated per-stratum correlation ranges $\hat{r}_k$, the effective sample size should be computed as $n_{\mathrm{eff}} = n^4 / \sum_k n_k^2 \hat{r}_k^2$, not from a global variogram. A scene that is half cropland ($r = 40$) and half forest ($r = 5$) has $n_{\mathrm{eff}} = 2n^2/1625 \approx n^2/812$, whereas a global variogram might estimate $r \approx 22$, giving $n^2/484$, overestimating of the effective sample size by nearly a factor of two.
\end{remark}

\section{Implications for Spatial Cross-Validation}
\label{sec:cv}

The main practical consequence of the preceding results concerns validation methodology. We state implications for each setting.

\begin{corollary}[Spatial cross-validation in the stationary case]
\label{cor:cv_stationary}
Under the conditions of Theorem~\ref{thm:upper}:
\begin{enumerate}
\item \textit{Spatial block holdout} with block separation $g \geq r \cdot O(\log n)$ achieves the optimal generalization bound with effective sample size $n_{\mathrm{eff}} = \Theta(n^2/r^2)$ up to log factors.
\item \textit{Random holdout} treats all $n^2$ pixels as independent. Any uniform deviation bound based on the nominal sample size $n^2$ underestimates the true generalization error by a factor of up to $\sim r \log n$.
\end{enumerate}
\end{corollary}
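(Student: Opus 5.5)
The corollary follows from Theorems~\ref{thm:upper} and~\ref{thm:lower}, with one extra step: a direct computation showing how far a nominal-sample-size bound falls short. I treat the two parts separately.

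\emph{Part 1.} Spatial block holdout is a special case of the blocking construction behind Theorem~\ref{thm:upper}. Suppose the validation pixels (or validation blocks) are separated from the training region, and from each other, by at least $g \geq \frac{r}{c_1}\log\frac{2c_0 m}{\delta}$. Two facts then combine:
\begin{itemize}
\item Conditional on the training data, Lemma~\ref{lem:coupling} couples the validation losses to an i.i.d.\ surrogate with total-variation error at most $(m-1)\beta(g)\le\delta/2$.
\item Because the separation is also at least $g$, the training set couples to an independent copy. So $\hat h$ can be treated as fixed with respect to the validation field, at a further cost of $\beta(g)$.
\end{itemize}
Applying the bound~\eqref{eq:combined} with $a=1$ gives a deviation of order $r\log(n/\delta)\sqrt{d_{\mathrm{vc}}\log n}/n$. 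This corresponds to $n_{\mathrm{eff}}=\Theta(n^2/(r^2\log^2 n))$. Theorem~\ref{thm:lower} says no procedure can beat $\Omega(r\sqrt{d_{\mathrm{vc}}}/n)$, so this rate is optimal up to logarithmic factors.

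\emph{Part 2.} A random-holdout bound plugs the nominal count $n^2$ into the i.i.d.\ VC bound, giving a claimed deviation of order $\sqrt{d_{\mathrm{vc}}\log n}/n$. I compare this with the true minimax error using the super-pixel field of Theorem~\ref{thm:lower}. That field lies in the model class and has an expected excess risk of at least $\frac{1}{12}r\sqrt{d_{\mathrm{vc}}}/n$ for every algorithm. An empirical-risk deviation bound controls excess risk up to a factor of two. So the nominal bound must fail on this field: the true deviation is at least a constant times $r$ times the nominal one. Concretely, conditional on $\tau$, each $r\times r$ super-pixel contributes $r^2$ identical copies of a single draw, so the variance of the pixel average is inflated by exactly $r^2$. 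The standard deviation, and hence the confidence-interval width, is therefore underestimated by a factor of $r$. Against the upper bound of Theorem~\ref{thm:upper}, the ratio is at most $r\log(n/\delta)$. Together these give the claimed ``up to $\sim r\log n$'' factor: at least order $r$ on the hard instance, and at most order $r\log n$ in general.

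\emph{Main obstacle.} The delicate step is Part 1's claim that $\hat h$, chosen on the training folds, can be treated as independent of the validation block. The upper bound is uniform over $\mathcal H$, so it holds regardless of which $\hat h$ is chosen. The subtle point is the nondegenerate alignment $\tau$, which couples training and validation regions. To handle it, I will condition on $\tau=t$ throughout, as in the proof of Theorem~\ref{thm:upper}, and use the single-site condition to make the bound uniform in $t$.
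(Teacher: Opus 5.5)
Your proof is correct in substance, and its core is the paper's. The paper proves (i) by saying it is immediate from Theorem~\ref{thm:upper}, with optimality coming from comparison with Theorem~\ref{thm:lower}. It proves (ii) only by the effective-sample-size heuristic: a test half holds $\sim n^2/(2r^2)$ independent observations, so a standard error built on the nominal count is too small by $\sqrt{n^2/n_{\mathrm{eff}}}=\Theta(r)$. Your ``Concretely'' sentence is exactly this argument, made exact on the super-pixel field.

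What you add is a two-sided bracket. The lower end comes from Theorem~\ref{thm:lower} via the ERM factor-of-two argument, and the upper end of $r\log(n/\delta)$ comes from Theorem~\ref{thm:upper}. This is the only place the $\log n$ in the statement gets accounted for; the paper's proof stops at $\Theta(r)$.

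These extras need three small repairs.

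First, the contradiction with Theorem~\ref{thm:lower} gives $\mathbb{E}\sup_h|R(h)-\hat R(h)|\ge r\sqrt{d_{\mathrm{vc}}}/(24n)$ for some concept. Against your nominal bound of order $\sqrt{d_{\mathrm{vc}}\log n}/n$, that ratio is of order $r/\sqrt{\log n}$, not $r$. The clean factor $r$ comes only from your fixed-$h$ variance computation. Even that inflation is exactly $r^2$ only when the super-pixels tile $\Lambda_n$; in general it is $\Theta(r^2)$.

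Second, Theorem~\ref{thm:upper} controls $\hat R_{\mathrm{block}}$ on a sparse lattice of spacing $1+g$, not the dense pixel average that random holdout reports. Your upper end therefore needs one more step. Split $\Lambda_n$ (or the random test half, which is independent of the field) into the $(1+g)^2$ shifted sub-lattices of spacing $1+g$. Apply the theorem to each at level $\delta/(1+g)^2$. Then note that the dense average is a convex combination of the sub-lattice averages, which costs only a constant factor inside $\log(n/\delta)$.

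Third, the training/validation decoupling in Part~1 is unnecessary. As you observe yourself, the bound is uniform over $\mathcal{H}$, so it covers whatever $\hat h$ the training folds produce. As written, applying Lemma~\ref{lem:coupling} conditionally on the training data is also not licensed until that decoupling has been done.
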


\begin{proof}
Statement (i) is immediate from Theorem~\ref{thm:upper}. $\hat{R}_{\mathrm{block}}$ with separation $g = \Theta(r\log n)$ achieves the bound. For (ii), a random subset of $n^2/2$ test pixels drawn without spatial structure includes $\sim n^2/(2r^2)$ effectively independent observations (by the lower bound, this cannot yield better than $\Omega(r\sqrt{d_{\mathrm{vc}}}/n)$ accuracy in estimating $R(h)$), but the practitioner computes a standard error as if all $n^2/2$ pixels were independent, underestimating the confidence interval width by $\sqrt{n^2 / n_{\mathrm{eff}}} = \Theta(r)$.
\end{proof}

\begin{corollary}[Spatial cross-validation in the piecewise stationary case]
\label{cor:cv_piecewise}
Under the conditions of Theorem~\ref{thm:piecewise_upper}, spatial block holdout with per-stratum block separation $g_k = \Theta(r_k \log n)$ within each region~$\Lambda_k$ achieves the generalization bound~\eqref{eq:piecewise_bound} with effective sample size $n_{\mathrm{eff}} = n^4 / \sum_k n_k^2 r_k^2$. In particular:
\begin{enumerate}
\item Let $\bar{r} = \sum_k w_k r_k$ denote the area-weighted mean range, with $w_k = n_k^2/n^2$. Then
\[
n_{\mathrm{eff}} \;\le\; \frac{n^2}{\bar{r}^{\,2}},
\]
with equality if and only if all $r_k$ are identical. Thus, whenever the per-stratum ranges are heterogeneous, substituting any single averaged range into the stationary formula $n^2/\bar{r}^2$ strictly overestimates the effective sample size.
\item Random holdout underestimates the width of confidence intervals for the generalization error by a factor of up to $\sqrt{n^2/n_{\mathrm{eff}}} = \sqrt{\textstyle\sum_k n_k^2 r_k^2}\,/\,n$.
\end{enumerate}
\end{corollary}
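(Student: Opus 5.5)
The first claim, that block holdout with per-stratum separation $g_k = \Theta(r_k\log n)$ attains~\eqref{eq:piecewise_bound}, is a restatement of Theorem~\ref{thm:piecewise_upper}. The block estimator $\hat{\bar R}$ built there is exactly a holdout scheme whose retained pixels in $\Lambda_k$ are pairwise separated by $g_k$. I will point out that the proof of that theorem used only two things. The first is the per-region coupling (Lemma~\ref{lem:coupling}) with total error at most $\delta/2$. The second is Hoeffding plus a union bound over the VC cover. Neither step depends on whether the retained pixels are called ``training'' or ``validation'' data, so the bound and the identification $n_{\mathrm{eff}} = 1/\sum_k w_k^2/m_k = n^4/\sum_k n_k^2r_k^2$ (up to logs) transfer directly.

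For item (i), I reduce the inequality to a variance statement. With $w_k = n_k^2/n^2$, $\sum_k w_k = 1$ and $\sum_k n_k^2 r_k^2 = n^2\sum_k w_k r_k^2$. Hence
\begin{equation*}
n_{\mathrm{eff}} = \frac{n^2}{\sum_k w_k r_k^2}.
\end{equation*}
The claim $n_{\mathrm{eff}} \le n^2/\bar r^2$ is then equivalent to $\bar r^{\,2} = (\sum_k w_k r_k)^2 \le \sum_k w_k r_k^2$. This is Jensen's inequality for the convex map $x\mapsto x^2$ under the probability weights $w_k$, or equivalently Cauchy--Schwarz. The gap is exactly $\sum_k w_k (r_k-\bar r)^2 \ge 0$. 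Equality holds if and only if $r_k = \bar r$ for every $k$ with $w_k>0$, i.e.\ all ranges coincide, assuming every region is nonempty. This gives the strictness whenever the ranges are heterogeneous. For ``any single averaged range'' I will interpret the statement as the area-weighted mean $\bar r$. I will also note that any other averaging $\tilde r$ smaller than the root-mean-square range $\sqrt{\sum_k w_k r_k^2}$ overestimates as well.

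For item (ii), I mirror the proof of Corollary~\ref{cor:cv_stationary}(ii). The practitioner's standard error for a test-pixel average scales as $1/\sqrt{n^2}$ (or $1/\sqrt{n^2/2}$), while the true fluctuation scale is governed by $\sqrt{\sum_k w_k^2/m_k} \asymp 1/\sqrt{n_{\mathrm{eff}}}$. I will justify the lower side of this two-sided description by Theorem~\ref{thm:piecewise_lower}: no estimator from these pixels can beat $\max_k n_k r_k\sqrt{d_{\mathrm{vc}}}/n^2$, which is within $\sqrt K$ of $\sqrt{\sum_k n_k^2r_k^2}/n^2$. The ratio of widths is therefore $\sqrt{n^2/n_{\mathrm{eff}}} = \sqrt{\sum_k n_k^2r_k^2}/n$, and the phrase ``up to'' absorbs the $\sqrt K$ and logarithmic factors.

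The main obstacle is item (ii). ``Underestimates by a factor of'' is a heuristic comparison of scales, and making it rigorous requires a lower bound on the actual variance of the random-holdout estimator, not just a minimax rate. If pressed, I would use the super-pixel construction inside the dominant region. There the pixel average has variance of order $w_j^2 r_j^2/n_j^2$ per region, exceeding the i.i.d.\ formula by a factor of $r_j^2$, and summing over regions gives the stated ratio.
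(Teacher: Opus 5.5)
Your proposal is correct and follows the paper's proof: item (i) is the same Jensen step $\bigl(\sum_k w_k r_k\bigr)^2 \le \sum_k w_k r_k^2$ with $\sum_k w_k = 1$, the opening claim is read off Theorem~\ref{thm:piecewise_upper}, and item (ii) is referred back to the argument of Corollary~\ref{cor:cv_stationary}(ii), exactly as the paper does. Your additions refine that argument rather than change it: the explicit gap $\sum_k w_k (r_k-\bar r)^2$, the caveat that a root-mean-square average range would not overestimate, and the super-pixel variance computation that makes (ii) concrete.
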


\begin{proof}
Statement (i): since $\sum_k w_k = 1$ and $x \mapsto x^2$ is strictly convex, Jensen's inequality gives $\bar{r}^{\,2} = \bigl(\sum_k w_k r_k\bigr)^2 \le \sum_k w_k r_k^2 = \sum_k n_k^2 r_k^2 / n^2$, with equality if and only if the $r_k$ are all equal. Hence $n^2/\bar{r}^{\,2} \ge n^4/\sum_k n_k^2 r_k^2 = n_{\mathrm{eff}}$. Statement (ii) follows from the same argument as Corollary~\ref{cor:cv_stationary}(ii).
\end{proof}

\begin{remark}[Anisotropic block separation]
\label{rem:aniso_cv}
For anisotropic fields (Theorem~\ref{thm:aniso_upper}), the block separation should be direction-dependent: $g_k = \Theta(r_k \log n)$ in direction~$k$. This yields elongated validation blocks aligned with the correlation structure. For SAR imagery with range correlation $r_1 = 50$ pixels and azimuth correlation $r_2 = 10$ pixels, using isotropic separation $g = 50$ in both directions discards $\sim 80\%$ of the available effective samples compared to the anisotropic prescription.
\end{remark}

\paragraph{Practical workflow}
The theoretical results suggest the following procedure for spatial cross-validation on a new scene:
\begin{enumerate}
\item \textit{Stratify.} Partition the image into approximately homogeneous regions $\Lambda_1, \dots, \Lambda_K$ (e.g. by land cover class or ecological zone). If directional correlation is suspected, note the principal axes.
\item \textit{Estimate correlation ranges.} Within each region $\Lambda_k$, compute the empirical variogram $\hat{\gamma}_k(h)$ and fit a parametric model to obtain $\hat{r}_k$ (or directional ranges $\hat{r}_{k,1}, \hat{r}_{k,2}$ if anisotropic). A robust choice is the practical range, defined as the lag at which the fitted variogram reaches 95\% of its sill.
\item \textit{Compute effective sample size.} Use $n_{\mathrm{eff}} = n^4 / \sum_k n_k^2 \hat{r}_k^2$ (or $n^4 / \sum_k n_k^2 \hat{r}_{k,1}\hat{r}_{k,2}$ for anisotropic strata.
\item \textit{Set block separation.} Within each region, separate spatial CV folds by at least $g_k = c \cdot \hat{r}_k \log n$ pixels, where $c \geq 1/c_1$ depends on the covariance model.
\item \textit{Report corrected confidence intervals.} Replace the nominal sample size $n^2$ with $n_{\mathrm{eff}}$ when computing standard errors and confidence intervals for accuracy metrics.
\end{enumerate}
The bound is continuous in~$r$, and replacing $r_k$ with $\hat{r}_k(1 \pm \epsilon)$ changes $n_{\mathrm{eff}}$ by a factor of at most $(1 \pm \epsilon)^{-2}$, so moderate variogram estimation error produces proportional, not catastrophic, error in the bound.

\paragraph{Worked examples}
For a Sentinel-2 scene ($n = 10{,}000$ pixels per side at 10m resolution) with landscape correlation ranges of 200--500m ($r = 20$--$50$ pixels), the effective sample size under global stationarity is $n_{\mathrm{eff}} \approx n^2/r^2 \approx 40{,}000$--$250{,}000$, roughly two to three orders of magnitude smaller than the pixel count of $10^8$.
For a heterogeneous scene with three strata, suppose we have cropland ($40\%$ of pixels, $r_1 = 40$), forest ($40\%$, $r_2 = 8$), and urban ($20\%$, $r_3 = 15$). Writing the stratum pixel counts as $n_k^2 = w_k n^2$, the stratified effective sample size from~\eqref{eq:neff_piecewise} is
\begin{align*}
n_{\mathrm{eff}}
&= \frac{n^4}{0.4\, n^2 \cdot 40^2 + 0.4\, n^2 \cdot 8^2 + 0.2\, n^2 \cdot 15^2} \\
&= \frac{n^2}{0.4 \cdot 1600 + 0.4 \cdot 64 + 0.2 \cdot 225}
\approx \frac{n^2}{711}
\end{align*}
A global variogram on this scene might estimate the area-weighted mean range $\bar{r} \approx 22$, giving $n^2/\bar{r}^{\,2} \approx n^2/493$, which overestimates $n_{\mathrm{eff}}$ by a factor of $\sim 1.4$. Whenever the per-stratum ranges are heterogeneous, any single averaged range yields confidence intervals that are too narrow. Per-stratum estimation avoids this bias.

\section{Synthetic Experiments}
\label{sec:synthetic}

We first validate the theoretical results with a controlled simulation. The goals are to (1) confirm that random holdout overestimates classification accuracy on spatially correlated imagery, with the bias growing in the correlation range $r$; (2) verify the effective-sample-size scaling $n_{\mathrm{eff}} \sim n^{2}/r^{2}$; and (3) demonstrate that validation optimism decays with distance from the training region, reaching zero once the distance exceeds $r$.

We generate synthetic $n\times n$ images ($n{=}500$) from a stationary Gaussian random field with exponential covariance $C(d)=\exp(-d/r)$ and $d{=}6$ spectral bands, each drawn independently from the same covariance structure via circulant embedding (FFT). The true label at each pixel is $Y_{s}=\mathbf{1}[\mathbf{w}^{\!\top}\mathbf{X}_{s}+\epsilon_{s}>0]$, where $\mathbf{w}\in\mathbb{R}^{d}$ is a fixed unit-norm weight vector and $\epsilon_{s}\!\sim\!\mathcal{N}(0,\sigma^{2})$ with $\sigma{=}0.5$ is i.i.d.\ label noise. The linear boundary ensures that the Bayes-optimal classifier is well-defined, while the label noise makes the classification task nontrivial (Bayes error $\approx$15\%). The correlation range $r$ is the parameter we vary across experiments, taking values in $\{1,3,5,10,\ldots,70\}$.

We use a decision tree (depth 15, minimum leaf size 3) as the classifier. The high capacity of the tree is crucial here as a linear classifier trained on data generated from a linear boundary converges to the Bayes-optimal decision surface regardless of spatial structure and therefore exhibits negligible optimism. The decision tree can actually memorize spatially correlated noise patterns in the training set, and this overfitting to local spatial structure is the mechanism that inflates random-holdout accuracy.

True generalization error is estimated by evaluating each trained classifier on five independently generated images drawn from the same distribution (same $\mathbf{w}$, fresh random-field realizations). All reported results are averaged over 500 replications per configuration with error bars denoting $\pm 1$ standard error. Figure~\ref{fig:simulation} visualizes the experiments.

\begin{figure*}[htbp]
 \centering
 \includegraphics[width=\textwidth]{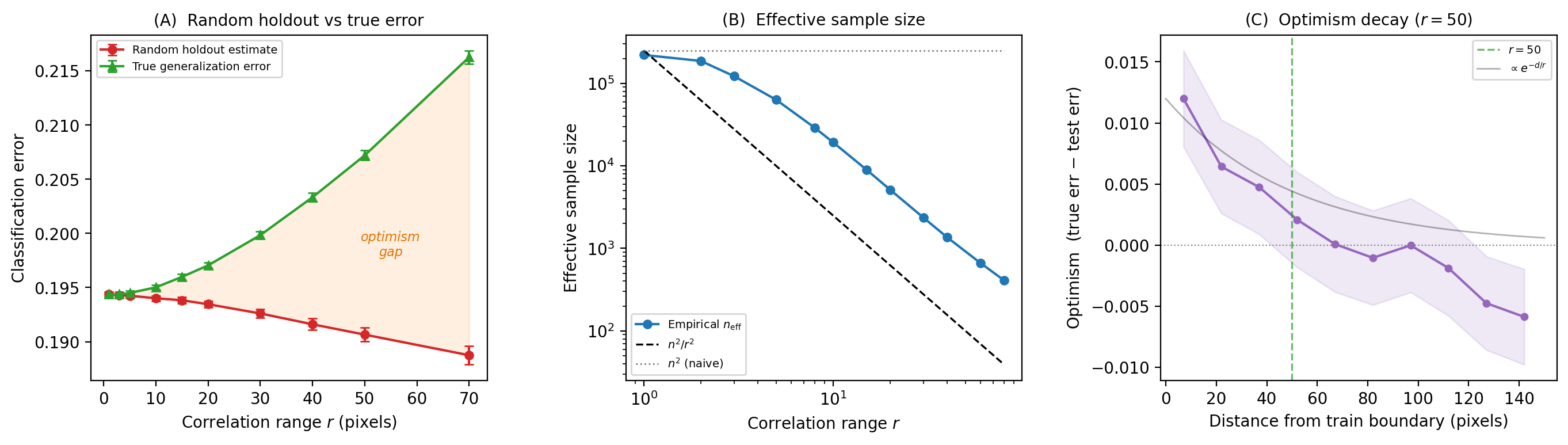}
 \caption{Simulation validation of the results.
 (A)~Random holdout error (red) diverges from generalization error (green) as the correlation range~$r$ increases. Shaded area is the optimism gap.
 (B)~Empirical effective sample size (blue) scales log-log slope $=-1.78$ vs.\ theory $-2$.
 (C)~Validation optimism decays with distance from the training boundary, reaching zero at $d\approx r$ (vertical line). The gray curve shows $\propto e^{-d/r}$.
 All experiments use $n{=}500$, $d{=}6$ spectral bands, exponential covariance.}
 \label{fig:simulation}
\end{figure*}

\paragraph{Panel~(A): Random holdout vs.\ true error} For each value of $r$, we train the decision tree on a random 50\% of the pixels in a single image and evaluate on the remaining 50\%. The red curve shows the random-holdout error estimate and the green curve shows the true generalization error measured on fresh images. At $r{=}1$ (nearly i.i.d.\ pixels) the two coincide. As $r$ increases, they diverge: the random-holdout estimate decreases while the true error increases. The shaded region between them is the optimism gap, namely the amount by which random holdout underestimates the true classification error. At $r{=}70$ the gap reaches approximately 2.8 percentage points. As anticipated, the holdout estimate falls (the interleaved test set rewards the memorization) while true error rises (more correlation, more structured noise to memorize).
 
\paragraph{Panel~(B): Effective sample size} To measure $n_{\mathrm{eff}}$ empirically we train a logistic regression classifier on one image and evaluate it on a second, independently generated image, repeating this 800 times per value of $r$. The variance of the test error across repetitions gives $n_{\mathrm{eff}} = p(1{-}p)/\mathrm{Var}(\hat{e})$, where $p$ is the mean error. We use logistic regression rather than a decision tree here to minimize the contribution of training-set variability to the total variance, isolating the evaluation-side effect. The empirical $n_{\mathrm{eff}}$ (blue) tracks the theoretical $n^{2}/r^{2}$ line (dashed) in slope, with a log-log slope of $-1.78$, the shortfall from the predicted $-2$ reflecting finite-$r$ curvature from the $log^2n$ term, which flattens the local slope when $r$ is not small relative to $n$. The constant-factor offset (roughly one order of magnitude) is consistent with the polylogarithmic factors.

\paragraph{Panel~(C): Optimism decay with distance} We train the decision tree on the left half of the image (columns $1$ to $n/2$) and measure the test error in narrow 15-pixel-wide bands at increasing distances from the training boundary. At each distance $d$, we compute the optimism as the difference between the true generalization error and the band-specific test error. Positive optimism indicates that the band benefits from spatial leakage. The optimism is largest immediately adjacent to the training region ($+1.2$\% at distance 7) and decays to zero by distance $\approx r{=}50$, closely following the exponential decay $\propto e^{-d/r}$ predicted by the covariance model (gray curve). The vertical dashed line marks $d{=}r$. Beyond $d \approx r$, the optimism fluctuates around zero, confirming that pixels separated from the training data by more than the correlation range provide an unbiased estimate of generalization performance. The slight negative drift at distances $d > 2r$ reflects a secondary effect. At large separations the test pixels inhabit a sufficiently different realization of the random field that the classifier, which has partially memorized the training-half realization, performs slightly worse than on a generic fresh image. This is the distribution shift phenomenon we discuss in connection with Wadoux~et~al.~\cite{wadoux2021spatial}.

\section{Empirical Experiments}
\label{sec:real}

We then validate the theoretical predictions on satellite imagery spanning seven ecoregions, three sensors, and diverse land cover types. All tiles are approximately $500 \times 500$ pixels, extracted via Google Earth Engine \citep{gorelick2017google} from cloud-free summer (May--September 2021) median composites, with land cover labels from the 2021 National Land Cover Database (NLCD)~\citep{USGS2024NLCD}.

We use three sensors: Landsat~8 surface reflectance~\citep{roy2014landsat8} (30m, six spectral bands), Sentinel-2 surface reflectance~\citep{drusch2012sentinel2} (10m, ten bands), and Sentinel-1 GRD SAR~\citep{torres2012sentinel1} (10m, VV/VH polarizations). Tiles are drawn from seven regions selected for ecological diversity. We use Iowa (flat cropland), Colorado Front Range (mountainous forest/shrub), Chesapeake Bay (coastal wetland/forest/cropland), Arizona (arid desert/shrub), New England (mixed deciduous/urban), Georgia (pine plantation/wetland), and California Central Valley (irrigated agriculture). In total, 43 tiles were analyzed with 25 Landsat, 10 Sentinel-2 (co-located with Landsat tiles for cross-sensor comparison), and 8 Sentinel-1 (one per region, for anisotropy analysis). Table~\ref{tab:summary} summarizes the results.

\begin{table}[t]
\centering
\caption{Summary of correlation ranges and effective sample sizes across all study tiles. $\hat{r}$: fitted exponential range from the NDVI (or VV) variogram. $n_{\mathrm{eff}}$: block-bootstrap effective sample size at block size $b = \hat{r}$.}
\label{tab:summary}
\fontsize{6.5}{10}\selectfont
\begin{tabular}{llcrrrrr}
\toprule
Region & Tile & Sensor & Scale & $\hat{r}$ (px) & $\hat{r}$ (m) & $n_{\mathrm{test}}$ & $n_{\mathrm{eff}}$ \\
\midrule
\multirow{11}{*}{Iowa}
 & IA-L1 & Landsat~8 & 30m & 15.0 & 449 & 145684 & 1930 \\
 & IA-L2 & Landsat~8 & 30m & 25.0 & 751 & 139876 & 610 \\
 & IA-L3 & Landsat~8 & 30m & 13.0 & 390 & 147744 & 2687 \\
 & IA-L4 & Landsat~8 & 30m & 23.2 & 696 & 141620 & 1861 \\
 & IA-L5 & Landsat~8 & 30m & 5.6 & 168 & 149865 & 11915 \\
 & IA-L6 & Landsat~8 & 30m & 20.8 & 625 & 142002 & 2463 \\
 & IA-L7 & Landsat~8 & 30m & 18.2 & 545 & 144530 & 2448 \\
 & IA-S2a & Sentinel-2 & 10m & 35.8 & 358 & 135520 & 955 \\
 & IA-S2b & Sentinel-2 & 10m & 19.4 & 194 & 142485 & 1215 \\
 & IA-S1a & Sentinel-1 & 10m & 18.0 & 180 & 144232 & 1384 \\
 & IA-S1b & Sentinel-1 & 10m & 11.9 & 119 & 146349 & 2467 \\
\midrule
\multirow{5}{*}{Colorado}
 & CO-L1 & Landsat~8 & 30m & 26.1 & 782 & 143674 & 1758 \\
 & CO-L2 & Landsat~8 & 30m & 14.3 & 430 & 149108 & 3168 \\
 & CO-L3 & Landsat~8 & 30m & 15.4 & 461 & 148632 & 2691 \\
 & CO-S2 & Sentinel-2 & 10m & 36.9 & 369 & 139095 & 1285 \\
 & CO-S1 & Sentinel-1 & 10m & 15.7 & 157 & 149238 & 2677 \\
\midrule
\multirow{5}{*}{Chesapeake}
 & CB-L1 & Landsat~8 & 30m & 48.2 & 1445 & 138308 & 1608 \\
 & CB-L2 & Landsat~8 & 30m & 11.5 & 345 & 156006 & 6235 \\
 & CB-L3 & Landsat~8 & 30m & 101.1 & 3034 & 112497 & 171 \\
 & CB-S2 & Sentinel-2 & 10m & 17.8 & 178 & 153405 & 2810 \\
 & CB-S1 & Sentinel-1 & 10m & 14.9 & 149 & 154866 & 4167 \\
\midrule
\multirow{5}{*}{Arizona}
 & AZ-L1 & Landsat~8 & 30m & 10.7 & 321 & 157464 & 2355 \\
 & AZ-L2 & Landsat~8 & 30m & 18.9 & 566 & 153260 & 1740 \\
 & AZ-L3 & Landsat~8 & 30m & 43.3 & 1299 & 141135 & 4269 \\
 & AZ-S2 & Sentinel-2 & 10m & 28.6 & 286 & 148716 & 373 \\
 & AZ-S1 & Sentinel-1 & 10m & 11.4 & 114 & 156978 & 1898 \\
\midrule
\multirow{5}{*}{New England}
 & NE-L1 & Landsat~8 & 30m & 28.6 & 859 & 144248 & 659 \\
 & NE-L2 & Landsat~8 & 30m & 12.0 & 360 & 154256 & 4330 \\
 & NE-L3 & Landsat~8 & 30m & 21.2 & 635 & 148500 & 2027 \\
 & NE-S2 & Sentinel-2 & 10m & 32.1 & 321 & 142560 & 529 \\
 & NE-S1 & Sentinel-1 & 10m & 23.0 & 230 & 147015 & 472 \\
\midrule
\multirow{6}{*}{Georgia}
 & GA-L1 & Landsat~8 & 30m & 35.2 & 1057 & 139173 & 842 \\
 & GA-L2 & Landsat~8 & 30m & 8.4 & 253 & 156806 & 5065 \\
 & GA-L3 & Landsat~8 & 30m & 21.5 & 646 & 148004 & 6550 \\
 & GA-S2a & Sentinel-2 & 10m & 101.7 & 1017 & 112440 & 91 \\
 & GA-S2b & Sentinel-2 & 10m & 18.9 & 189 & 152312 & 958 \\
 & GA-S1 & Sentinel-1 & 10m & 70.9 & 709 & 127342 & 134 \\
\midrule
\multirow{6}{*}{California}
 & CA-L1 & Landsat~8 & 30m & 37.6 & 1127 & 146421 & 955 \\
 & CA-L2 & Landsat~8 & 30m & 23.3 & 698 & 153634 & 1842 \\
 & CA-L3 & Landsat~8 & 30m & 14.6 & 439 & 157760 & 2621 \\
 & CA-S2a & Sentinel-2 & 10m & 25.3 & 253 & 152337 & 858 \\
 & CA-S2b & Sentinel-2 & 10m & 23.7 & 237 & 153634 & 694 \\
 & CA-S1 & Sentinel-1 & 10m & 25.6 & 256 & 152337 & 638 \\
\bottomrule
\end{tabular}
\end{table}

\subsection{Variogram Estimation and $n^2/r^2$ Scaling}

For each tile we compute the NDVI (or VV backscatter for SAR) and estimate an isotropic empirical variogram from ${\sim}200{,}000$ random pixel pairs, binned at 2-pixel lag intervals. We fit an exponential model with nugget, $\gamma(h) = c_0 + (c_1 - c_0)(1 - e^{-h/r})$, by nonlinear least squares.

The fitted correlation ranges vary substantially both within and across regions. For Landsat tiles, the median $\hat{r}$ per region ranges from ${\sim}15$ pixels (450m) in Iowa to ${\sim}43$ pixels (1300m) in parts of Arizona, with within-region variability often spanning a factor of 2--3. Iowa Landsat tiles yield $\hat{r} \in [5.6, 25.0]$ pixels across seven tiles, reflecting local differences in field size and land cover heterogeneity. Cross-sensor comparisons on co-located tiles show that physical correlation lengths are broadly consistent between Landsat and Sentinel-2 (e.g. Iowa: ${\sim}450$m vs.\ ${\sim}280$m), while SAR consistently shows shorter correlation lengths (${\sim}150$--$180$m), consistent with radar sensitivity to fine-scale surface structure.

To test the $n^2/r^2$ prediction, we employ a spatial block bootstrap on all tiles. For each tile, a logistic regression classifier is trained on the left third of the image (with a gap of $\hat{r}$ pixels to prevent leakage) and evaluated on the remaining test region. For each block size $b$, we tile the test region into non-overlapping $b \times b$ blocks, resample blocks with replacement 500 times, and compute $n_{\mathrm{eff}} = \hat{p}(1-\hat{p})/\mathrm{Var}(\hat{e})$.

Fig.~\ref{fig:real} shows the result for four representative tiles spanning Iowa, Chesapeake, California, and New England. In all cases, $n_{\mathrm{eff}}$ decreases as a power law in the block size, running parallel to the $b^{-2}$ reference line. The log-log slopes range from $-1.6$ to $-1.9$ across tiles, consistent with the theoretical prediction of $-2$ up to the polylogarithmic correction in Theorem~\ref{thm:upper}. The scaling holds across all three sensors, all seven regions, and for both binary and multi-class classification tasks. Across all tiles, $n_{\mathrm{eff}}$ ranges from 91 (GA-S2a, $\hat{r} = 102$ pixels) to 11915 (IA-L5, $\hat{r} = 5.6$ pixels), compared to test set sizes of 112000--158000 pixels, confirming that effective sample sizes are typically 1--3 orders of magnitude smaller than pixel counts.

\begin{figure}[t]
\centering
\includegraphics[width=\columnwidth]{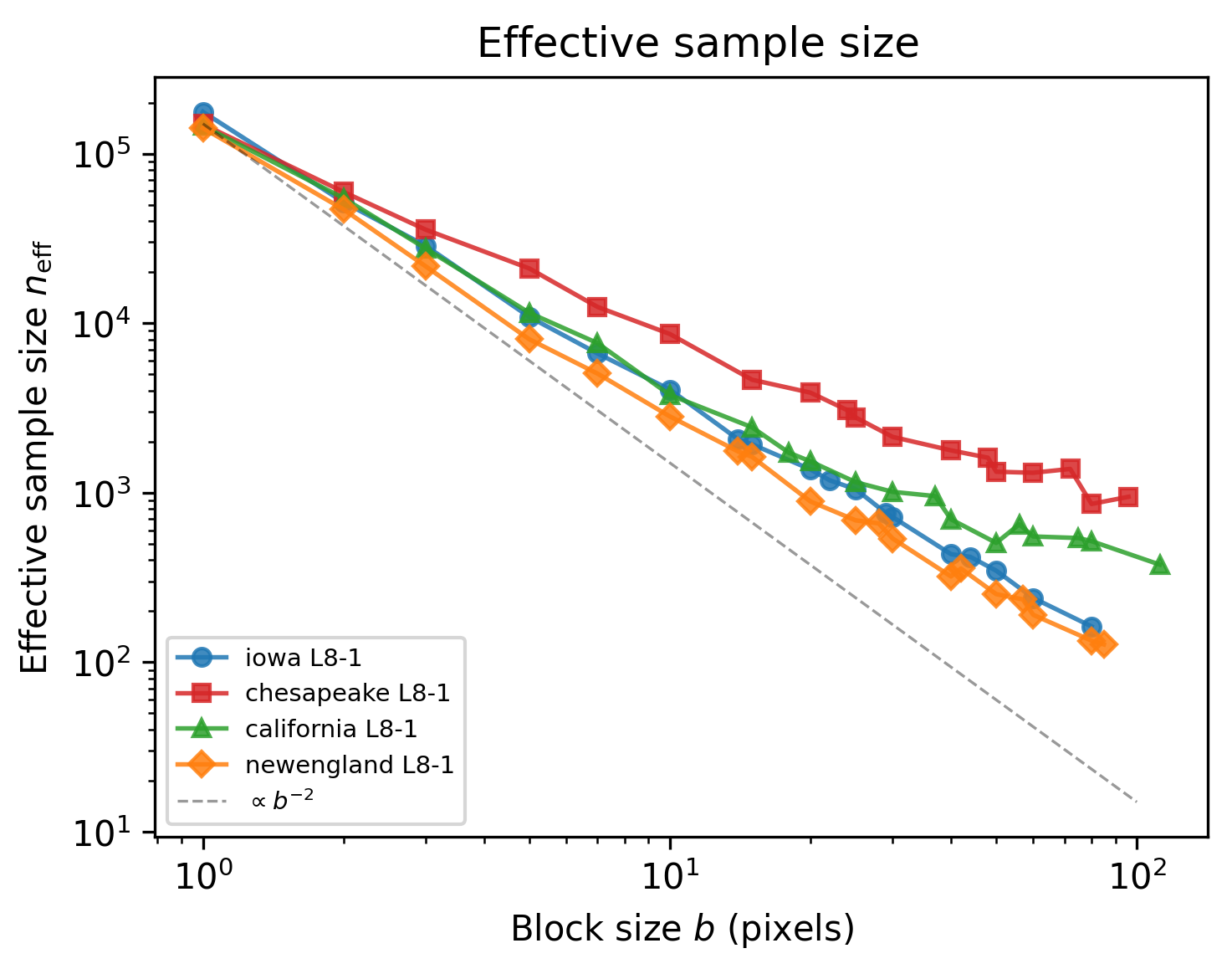}
\caption{Block-bootstrap effective sample size as a function of block size~$b$ for four tiles spanning different regions.}
\label{fig:real}
\end{figure}

\subsection{Anisotropy Validation}

Sentinel-1 SAR imagery provides a natural test case for the anisotropic extension (Theorem~\ref{thm:aniso_upper}), since SAR backscatter can exhibit directional correlation structure from both the sensor geometry and landscape features. For each of the eight SAR tiles, we compute directional variograms along the N--S and E--W axes (with a $\pm 30^\circ$ angular tolerance) and fit exponential models independently to obtain directional ranges $\hat{r}_1$ (N--S) and $\hat{r}_2$ (E--W).

Table~\ref{tab:aniso} summarizes the results. The anisotropy ratio $\max(\hat{r}_1, \hat{r}_2)/\min(\hat{r}_1, \hat{r}_2)$ ranges from 1.0 (isotropic) to 8.2, and the fraction of effective samples wasted by isotropic blocking ranges from 0\% to 88\%. Chesapeake exhibits the strongest anisotropy (ratio 8.2x, 88\% waste), with the E--W correlation extending along the coastal shoreline ($\hat{r}_2 = 73$ pixels) while the N--S direction decorrelates rapidly across the land--water boundary ($\hat{r}_1 = 9$ pixels). Colorado shows a similar pattern along mountain ridges (ratio 5.9x, 83\% waste). California and Iowa are near-isotropic (ratios 1.1x and 1.4x), consistent with the regular geometry of agricultural fields. These results confirm that the anisotropic prescription $n_{\mathrm{eff}} = n^2/(r_1 r_2)$ from Theorem~\ref{thm:aniso_upper} can recover substantially more effective samples than isotropic blocking with $r_{\max}$ in landscapes with directional structure.

\begin{table}[htbp]
\centering
\caption{Directional correlation ranges from Sentinel-1 SAR. $\hat{r}_1$, $\hat{r}_2$: N--S and E--W exponential ranges. Ratio: $\max/\min$ of the two ranges. Waste: fraction of effective samples lost by using isotropic blocking with $r_{\max}$ instead of the anisotropic $r_1 r_2$.}
\label{tab:aniso}
\small
\begin{tabular}{lrrrrr}
\toprule
Tile & $\hat{r}_1$ (px) & $\hat{r}_2$ (px) & Ratio & $r_1 r_2$ & Waste \\
\midrule
IA-S1a & 17 & 12 & 1.4x & 208 & 27\% \\
IA-S1b & 75 & 75 & 1.0x & 5563 & 0\% \\
CO-S1 & 60 & 10 & 5.9x & 604 & 83\% \\
CB-S1 & 9 & 73 & 8.2x & 643 & 88\% \\
AZ-S1 & 9 & 21 & 2.3x & 185 & 57\% \\
NE-S1 & 47 & 36 & 1.3x & 1696 & 22\% \\
GA-S1 & 159 & 60 & 2.7x & 9515 & 63\% \\
CA-S1 & 24 & 27 & 1.1x & 647 & 10\% \\
\bottomrule
\end{tabular}
\end{table}

\subsection{Piecewise Stationarity Validation}
 
To validate the piecewise stationary extension (Theorem~\ref{thm:piecewise_upper}), we partition selected tiles by NLCD land cover class and fit per-stratum variograms. For strata with at least 500 valid pixels, we estimate $\hat{r}_k$ and compute the stratified effective sample size $n_{\mathrm{eff}} = n_{\mathrm{total}}^2 / \sum_k n_k \hat{r}_k^2$, where $n_k$ is the pixel count in stratum~$k$. Strata with non-convergent variogram fits are excluded.
 
Table~\ref{tab:piecewise} shows the per-stratum breakdown for six tiles from Iowa, Colorado, and Chesapeake. The correlation ranges vary dramatically within individual scenes. In Iowa, cropland has $\hat{r} = 9.7$ pixels while developed land has $\hat{r} = 3.4$, and in Colorado, herbaceous cover reaches $\hat{r} = 60$ pixels while shrub is only $4.6$. The dominant stratum's correlation range does not determine the global estimate.

The ratio of stratified to global $n_{\mathrm{eff}}$ ranges from $0.4\times$ (CO-L3) to $5.2\times$ (CO-L2), demonstrating that the global variogram can err in either direction. When most strata have shorter ranges than the global estimate (IA-L1, CO-L1), the stratified $n_{\mathrm{eff}}$ is higher, so the global variogram is overly conservative. When a minority stratum with long-range correlation is averaged away (CO-L3), the global estimate is anticonservative and the true effective sample size is lower than reported. The Chesapeake tile (CB-L1) illustrates a third scenario where water ($\hat{r} = 32$, 52\% of pixels) and wetland ($\hat{r} = 32$, 12\%) dominate the denominator despite cropland ($\hat{r} = 7$) covering nearly a quarter of the scene.
 
These results confirm that per-stratum estimation is necessary for heterogeneous scenes and that the formula from Theorem~\ref{thm:piecewise_upper} captures effects invisible to a global variogram.

\begin{table}[htbp]
\centering
\caption{Stratified vs.\ global effective sample size for six heterogeneous tiles. $\bar{r}$: global variogram range. $r_k$ range: min--max of per-stratum ranges (NLCD classes with $\geq 500$ pixels).}
\label{tab:piecewise}
\small
\begin{tabular}{lcrrrr}
\toprule
Tile & $K$ & $\bar{r}$ (px) & $r_k$ range & $n_{\mathrm{eff}}^{\mathrm{global}}$ & $n_{\mathrm{eff}}^{\mathrm{strat}}$ / $n_{\mathrm{eff}}^{\mathrm{global}}$ \\
\midrule
IA-L1 & 6 & 14.0 & 3.2--17.6 & 1161 & 2.4x \\
CO-L1 & 5 & 17.9 & 4.6--32.1 & 730 & 4.3x \\
CO-L2 & 5 & 28.2 & 3.4--28.8 & 295 & 5.2x \\
CO-L3 & 6 & 17.1 & 11.3--60.3 & 802 & 0.4x \\
CO-S2 & 4 & 21.2 & 2.2--17.7 & 522 & 2.2x \\
CB-L1 & 5 & 50.3 & 6.6--32.4 & 96 & 3.5x \\
\bottomrule
\end{tabular}
\end{table}

\section{Discussion}
\label{sec:discussion}
 
We established that the effective sample size of a spatially correlated remote sensing image is $\Theta(n^2/r^2)$, where $r$ is the correlation range, and that this rate is minimax optimal. The result provides a rigorous statistical justification for spatial cross-validation in the remote sensing context and gives quantitative guidance on the required block separation.
 
The bound takes the correlation range~$r$ as known. In practice, $r$ can be estimated from an empirical variogram. We can compute the semivariance $\hat{\gamma}(h) = \frac{1}{2|N(h)|}\sum_{(s,t) \in N(h)} (X_s - X_t)^2$ at each lag~$h$, fit a parametric model (exponential, Mat\'ern, etc.), and read off the practical range at which correlation drops to 5\% of the sill. Since the bound is continuous in~$r$, moderate estimation error in~$\hat{r}$ produces proportional error in the bound by replacing $r$ with $\hat{r}(1 \pm \epsilon)$ changes $n_{\mathrm{eff}}$ by a factor of $(1 \pm \epsilon)^{-2}$.

A subtlety concerns which spatial field the correlation range~$r$ refers to. The bounds are driven by the mixing rate of the loss field $\{\ell(h,s)\}$, whereas the variograms in Section~\ref{sec:real} are computed on the raw features, which may not coincide. The feature range bounds the loss range from above when the label depends on the features through a fixed decision boundary. However, when intra-class variability is small, the loss field decorrelates faster than the feature field, and the feature-variogram range $\hat{r}$ overestimates the effective correlation range of the loss, yielding a conservative $n_{\mathrm{eff}}$. Estimating the loss-field range directly, e.g.\ from the variogram of held-out residuals, would tighten the bound, at the cost of requiring a fitted classifier. We use the feature range throughout as it is model-free, computable prior to training, and its conservatism errs in the safe direction for confidence intervals.

Wadoux et al.~\cite{wadoux2021spatial} argued that spatial cross-validation can be pessimistic for map accuracy assessment. This is consistent with our analysis once the prediction task is distinguished. Theorem~\ref{thm:upper} characterizes generalization to new draws from the same stationary field. When the goal is instead extrapolation to geographically disjoint regions (new biomes, different continents), the relevant difficulty is distributional shift, not spatial correlation, and domain adaptation bounds are more appropriate. Spatial CV with large block separations can conflate the two effects, penalizing a classifier for distribution shift that the user does not intend to generalize across. Our result clarifies that the block separation should be scaled to the correlation range~$r$, not to the size of the study area.
 
We conjecture the blocking argument generalizes to space-time lattices $\Lambda_n \times \{1,\dots,T\}$ with anisotropic mixing spatial range~$r_s$ and temporal range~$r_t$. Applying 3D blocking gives an effective sample size $n_{\mathrm{eff}} \sim n^2 T / (r_s^2 r_t)$, quantifying the marginal value of each additional revisit. For a Sentinel-2 time series with $T = 36$ revisits per year and temporal correlation of $r_t \approx 3$ revisits, a temporal stack multiplies the effective sample size by $\sim T/r_t \approx 12$ relative to a single image.
 
Our results have several limitations. The bounds are worst-case and conservative, with unoptimized constants and polylogarithmic factors. The key value is the $n_{\mathrm{eff}} = \Theta(n^2/r^2)$ scaling and the spatial-CV guidance, not a sharp constant. The correlation range $r$ must be estimated from an empirical variogram; while the bound is continuous in $r$ and moderate estimation error propagates proportionally (Remark~\ref{rem:piecewise_practical}), a poorly specified variogram will bias $n_{\mathrm{eff}}$. We assume label noise is conditionally independent given features, but tasks with spatial label structure (e.g. segmentation) may exhibit different mixing behavior. Investigating such settings is left to future work.
\section{Conclusion}

We derived finite-sample generalization bounds for classifiers trained on spatially dependent remote sensing data. The main result is that the effective sample size of an $n \times n$ image with spatial correlation range $r$ is $\Theta(n^2/r^2)$, and that this rate is minimax optimal. For anisotropic fields, the effective sample size is $\Theta(n^2/(r_1 r_2))$, and for piecewise stationary scenes per-stratum correlation ranges must be combined through a variance-weighted formula. As a corollary, spatial block holdout with separation proportional to $r$ achieves optimal generalization guarantees, while random holdout can underestimate confidence intervals by $\Theta(r)$. These results provide theoretical footing and quantitative guidance for spatial cross-validation in remote sensing accuracy assessments.

\bibliographystyle{IEEEtran}
\bibliography{references}
\clearpage

\appendix
\setcounter{section}{0}
\renewcommand{\thesection}{S\arabic{section}}
\renewcommand{\theequation}{S\arabic{equation}}
\renewcommand{\thefigure}{S\arabic{figure}}
\renewcommand{\thetable}{S\arabic{table}}
\setcounter{equation}{0}\setcounter{figure}{0}\setcounter{table}{0}

\section{A Primer on Generalization Bounds}
\label{sec:si_primer}
This section briefly glosses the learning-theoretic machinery used in the main text, for readers who work with machine learning in practice but have not seen the underlying theory. Readers seeking a fuller treatment may consult the accessible introduction of Shalev-Shwartz and Ben-David~\cite{shalevshwartz2014understanding}, whose early chapters develop PAC learning, VC dimension, and uniform convergence from first principles (a free copy is hosted by the authors \href{https://www.cs.huji.ac.il/~shais/UnderstandingMachineLearning/}{here}), or the more advanced account of Mohri et al.~\cite{mohri2018foundations}, which also covers the learning theoretic treatment of dependent data underlying the mixing bounds we build on.

Anyone who has held out a test set has estimated a generalization gap. A classifier $h$ has a \emph{true risk} $R(h)$, or its error rate on the whole population, which we never observe, and an \emph{empirical risk} $\hat{R}(h)$, which is the error rate on the finite sample in hand. Reporting held-out accuracy is the act of using $\hat{R}(h)$ as a stand-in for $R(h)$, and the hope that the two are close is the hope that the gap $R(h) - \hat{R}(h)$ is small. A \emph{generalization bound} makes that hope quantitative as a guarantee, holding with high probability, on how large the gap can be and, crucially, one that holds \emph{simultaneously} for every $h$ the training procedure might have selected. If we only controlled the gap for a single $h$ fixed in advance, the training procedure's freedom to select the $h$ that happens to look best on this particular sample would be exactly the loophole overfitting exploits. The guarantee must therefore survive a worst-case choice of $h$, which is what the supremum in~\eqref{eq:vc_bound} encodes.

Everyone knows that a model flexible enough to memorize the training set will report an optimistically low training error, and VC dimension is the formal handle on ``flexible enough.'' Say that $\mathcal{H}$ \emph{shatters} a set of points if, for \emph{every} possible assignment of binary labels to them, some $h \in \mathcal{H}$ classifies all of them correctly, then the class is flexible enough to realize any labeling, including the adversarial ones. The \emph{VC dimension} $d_{\mathrm{vc}}$ of $\mathcal{H}$~\cite{vapnik1971vc} is the size of the largest set it can shatter. A linear classifier in the plane has $d_{\mathrm{vc}} = 3$, which means any labeling of three points is separable by a line, but no line separates four points labeled in the XOR pattern. A deep decision tree can shatter large sets and so has large $d_{\mathrm{vc}}$. A class that shatters sets of every size has $d_{\mathrm{vc}} = \infty$ and can fit pure noise. It is a measure of capacity that, unlike a raw parameter count, directly predicts overfitting. The classical guarantee is that, for $n$ i.i.d.\ samples, with probability at least $1-\delta$,
\begin{equation}
\label{eq:vc_bound}
\underbrace{\sup_{h \in \mathcal{H}} \bigl|R(h) - \hat{R}(h)\bigr|}_{\text{worst-case gap over the class}} \;\lesssim\; \sqrt{\frac{\overbrace{d_{\mathrm{vc}} \log n}^{\text{capacity}} + \log(1/\delta)}{\underbrace{n}_{\text{sample size}}}}.
\end{equation}

Holding $n$ fixed, a more flexible class (larger $d_{\mathrm{vc}}$) admits a larger gap, which is the formal version of ``flexible models overfit more.'' Holding $d_{\mathrm{vc}}$ fixed, the gap shrinks as $1/\sqrt{n}$, e.g. quadrupling the sample halves the error bar. The rate $1/\sqrt{n}$, and specifically the appearance of $n$ in the denominator, is the object our main results correct.

The $n$ in~\eqref{eq:vc_bound} arrives through a concentration inequality. The average of $n$ independent bounded quantities concentrates around its mean at rate $1/\sqrt{n}$. This is the one step that assumes independence, and it is the step spatial correlation breaks. When neighboring pixels are correlated, the $n^2$ pixels of an image do not supply $n^2$ independent quantities to average, adjacent pixels largely repeat information already present in their neighbors. The image behaves as though it held some smaller number $m < n^2$ of independent observations. The technical contribution of Section~\ref{sec:upper} is to make ``some smaller number'' formally defined. 

A blocking construction extracts $m \approx n^2/r^2$ nearly-independent units, and the bound~\eqref{eq:vc_bound} then holds with $m$ in place of $n$. $m$ here is not a subset of privileged pixels one could point to on the image, but the number of actually independent observations whose averaged error would carry the same variance as the full correlated scene. This variance-based reading is what the piecewise-stationary formula of Section~\ref{sec:extensions} makes explicit, where $n_{\mathrm{eff}}$ appears as the reciprocal of a weighted variance. 

The blocking argument shows the $n^2/r^2$ rate is achievable. Section~\ref{sec:lower} shows it cannot be beaten, and the reason is that an adversary may build a scene in which every $r \times r$ patch is internally identical and drawn independently of the others, then all $n^2$ pixels encode only $\lfloor n/r \rfloor^2$ free pieces of information, and no algorithm, however clever, can recover structure that was never present. The effective sample size is thus a property of the spatial dependence, not of the learner.

\section{Multi-Class Scaling Confirmation}
\label{sec:si_multiclass}

The generalization bounds of Section~\ref{sec:upper} are stated for a hypothesis class of $K$-way classifiers and hold for any number of classes $K \geq 2$. The number of classes enters the bound only through the VC dimension $d_{\mathrm{vc}}$, which sits inside the constant. The dependence of the effective sample size on the correlation range, $n_{\mathrm{eff}} \sim n^2/r^2$, is independent of $K$. This section confirms that invariance empirically, and additionally checks that class imbalance does not alter the scaling.

\subsection{Setup}
We reuse the field-generation and estimation procedure of the binary effective-sample-size experiment (Section~\ref{sec:synthetic}, Panel~B) unchanged, varying only the labeling rule. For each value of the correlation range $r$, we draw an $n \times n$ image ($n = 500$) with $d = 6$ spectral bands from a stationary Gaussian random field with exponential covariance $C(d) = \exp(-d/r)$, each band generated independently by circulant embedding. The latent score $\ell_s = \mathbf{w}^\top \mathbf{X}_s + \epsilon_s$ is formed from a fixed weight vector $\mathbf{w}$ and i.i.d.\ label noise $\epsilon_s \sim \mathcal{N}(0, 0.5^2)$, exactly as in the binary experiment.

The three labeling rules differ only in how $\ell_s$ is discretized:
\begin{itemize}
\item \emph{Binary} ($K=2$): a single threshold at the median of $\ell_s$, reproducing the rule $Y_s = \mathbf{1}[\ell_s > t]$ of Section~\ref{sec:synthetic}.
\item \emph{Four-class balanced} ($K=4$): three thresholds at the $25$th, $50$th, and $75$th percentiles of $\ell_s$, giving four classes of equal prior probability.
\item \emph{Four-class imbalanced} ($K=4$): three thresholds at the $70$th, $80$th, and $90$th percentiles, giving a $70/10/10/10$ split in which one class dominates. This is the regime in which practitioners most often worry about imbalance.
\end{itemize}
Thresholds are estimated once from a pilot draw so that the target class proportions are met on average. Each condition is fit with multinomial logistic regression and evaluated on a second, independently drawn image. We use $600$ replications per value of $r \in \{1, 2, 3, 5, 8, 10, 15, 20, 30, 40, 60, 80\}$ and estimate $n_{\mathrm{eff}} = \bar{p}(1-\bar{p})/\mathrm{Var}(\hat{e})$ from the across-replication variance of the test error, where $\bar{p}$ is the mean error rate. This estimator is valid for any $K$ since the per-pixel error indicator is Bernoulli regardless of how many classes produced a given misclassification.

\subsection{Results}
Figure~\ref{fig:si_multiclass} shows the three curves. In every condition, $n_{\mathrm{eff}}$ decreases as a power law in $r$, running parallel to the $n^2/r^2$ reference and far below the naive pixel count $n^2$. The fitted log-log slopes (over $r \geq 3$, where finite-$r$ curvature is mild) are $-1.85$ for the binary task, $-1.87$ for the four-class balanced task, and $-1.88$ for the four-class imbalanced task, and all consistent with the theoretical exponent of $-2$ up to the polylogarithmic correction of Theorem~\ref{thm:upper}.

The level of $n_{\mathrm{eff}}$ does depend on the task, as expected. The mean error rate rises from $\bar{p} \approx 0.13$ in the binary case to $\bar{p} \approx 0.33$ for four balanced classes (four classes are intrinsically harder to separate), and returns to $\bar{p} \approx 0.13$ for the imbalanced split, whose dominant class is easily classified. These shifts move the curves vertically but leave their slope unchanged. Neither the number of classes nor the degree of class imbalance affects the $n^2/r^2$ scaling. The cost of spatial correlation is governed by the correlation range $r$ alone, with $K$ and the class distribution absorbed into the constant, exactly as the theory predicts.

Class imbalance is a well-recognized difficulty in remote-sensing land-cover classification, where minority classes are systematically under-learned and are commonly addressed by resampling or margin-based methods~\cite{douzas2019imbalanced, mellor2015imbalance}. Our experiment shows that this concern, while important for per-class accuracy, is orthogonal to the spatial-correlation correction studied here. Tmbalance shifts the level of $n_{\mathrm{eff}}$ through the base error rate but leaves the $n^2/r^2$ scaling unchanged, so the block-separation prescription applies regardless of class balance.

\begin{figure}[t]
\centering
\includegraphics[width=\columnwidth]{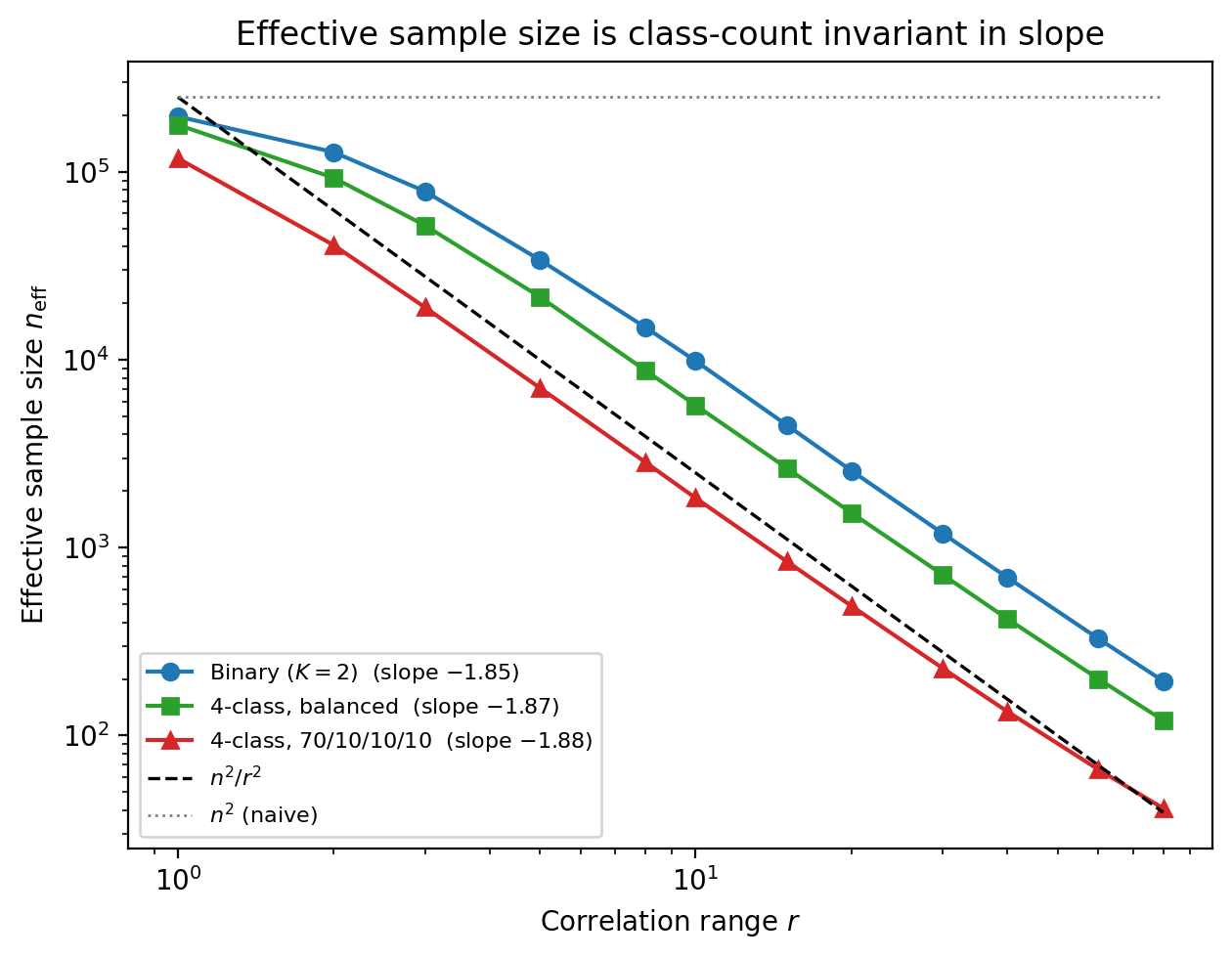}
\caption{Effective sample size versus correlation range for binary and four-class classification at $n=500$.}
\label{fig:si_multiclass}
\end{figure}

\end{document}